\documentclass[runningheads]{llncs}

\usepackage{arxiv}
\usepackage{subcaption}

\usepackage{graphicx}
\usepackage{color}
\usepackage{amsmath}
\usepackage{amssymb}
\usepackage{booktabs}
\usepackage{algorithm}
\usepackage{algpseudocode}
\newtheorem{hyp}{Hypothesis}
\usepackage{bm}
\usepackage{sidecap}
\usepackage{caption}

\usepackage[accsupp]{axessibility}  %

\usepackage[pagebackref,breaklinks,colorlinks]{hyperref}

\usepackage{orcidlink}

\usepackage[capitalize]{cleveref}

\crefname{section}{Sec.}{Secs.}
\Crefname{section}{Section}{Sections}
\Crefname{table}{Table}{Tables}
\crefname{table}{Tab.}{Tabs.}

\usepackage{authblk}

\captionsetup[table]{skip=0pt}
\captionsetup[table]{skip=0pt}

\begin{document}

\setlength{\abovedisplayskip}{4pt} %
\setlength{\belowdisplayskip}{4pt} %
\setlength\textfloatsep{0pt}
\setlength\floatsep{2mm}
\setlength\textfloatsep{2mm}
\setlength\intextsep{2mm}
\setlength\abovecaptionskip{2mm}

\title{Flooding Regularization for Stable Training of \\ Generative Adversarial Networks}

\author[1]{Iu Yahiro}
\author[2,1]{Takashi Ishida}
\author[1,2]{Naoto Yokoya}
\affil[1]{The University of Tokyo, Bunkyo-ku, Tokyo, Japan}
\affil[2]{RIKEN AIP, Chuo-ku, Tokyo, Japan}

\maketitle
\begin{abstract}
Generative Adversarial Networks (GANs) have shown remarkable performance in image generation.
However, GAN training suffers from the problem of instability.
One of the main approaches to address this problem is to modify the loss function, often using regularization terms in addition to changing the type of adversarial losses. 
This paper focuses on directly regularizing the adversarial loss function.
We propose a method that applies flooding, an overfitting suppression method in supervised learning, to GANs to directly prevent the discriminator's loss from becoming excessively low.
Flooding requires tuning the flood level, but when applied to GANs, we propose that the appropriate range of flood level settings is determined by the adversarial loss function, supported by theoretical analysis of GANs using the binary cross entropy loss.
We experimentally verify that flooding stabilizes GAN training and can be combined with other stabilization techniques.
We also show that by restricting the discriminator's loss to be no less than the flood level, the training proceeds stably even when the flood level is somewhat high.

  \keywords{GANs \and Flooding  \and Regularization}
\end{abstract}

\section{Introduction}
Generative Adversarial Networks (GANs) are one of the learning frameworks for generative models proposed by Goodfellow et al.~\cite{NIPS2014_5ca3e9b1}, and they have shown remarkable performance in a wide range of image generation tasks~\cite{park2019SPADE,CycleGAN2017,8099502,Demir2018PatchBasedII,choi2020stargan}.
GANs are based on a training strategy where two models, a generator $G$ and a discriminator $D$, are trained adversarily.
The generator takes a noise vector $z$ sampled from a known distribution $\mathbb{P}_z$ (usually the standard normal distribution) as input and produces generated data $G(\bm{z})$ as output.
The discriminator $D$ takes either real data sampled from the target underlying distribution $\mathbb{P}_r$ or generated data as input $\bm{x}$ and outputs the probability $D(\bm{x})$ that the input is real data.
The discriminator aims to correctly distinguish between real and generated data, while the generator aims to reduce the discriminator's performance on generated data.
By designing the loss function in this way, Jensen-Shannon divergence of generated data distribution $\mathbb{P}_g$ and $\mathbb{P}_r$ is minimized~\cite{NIPS2014_5ca3e9b1}. 
Goodfellow et al.~\cite{NIPS2014_5ca3e9b1} defined this adversarial structure using min-max formulation of the following value function $V$, involving the generator $G$ and the discriminator $D$:

\begin{equation}
\begin{split}
\label{eq:minmax}
    \underset{G}{\min}\ \underset{D}{\max}\ V(D,G) =&
    \mathbb{E}_{\bm{x}\sim{\mathbb{P}_{r}}}[\log(D(\bm{x}))] + \mathbb{E}_{\bm{z}\sim{\mathbb{P}_{z}}}[\log (1-D(G(\bm{z}))].
\end{split}
\end{equation}

\begin{figure}[t]
 \begin{minipage}[b]{0.325\linewidth}
  \centering
  \includegraphics[keepaspectratio, width=0.8\linewidth]{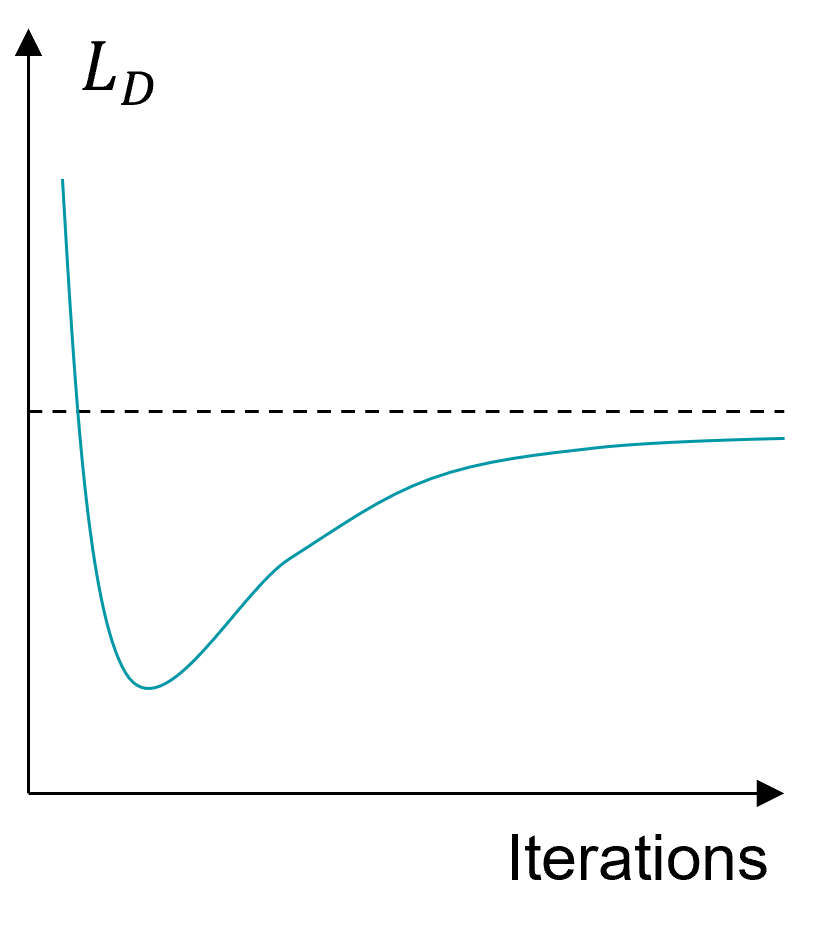}
  \subcaption{Ideal training}
 \end{minipage}
 \begin{minipage}[b]{0.325\linewidth}
  \centering
  \includegraphics[keepaspectratio, width=0.8\linewidth]{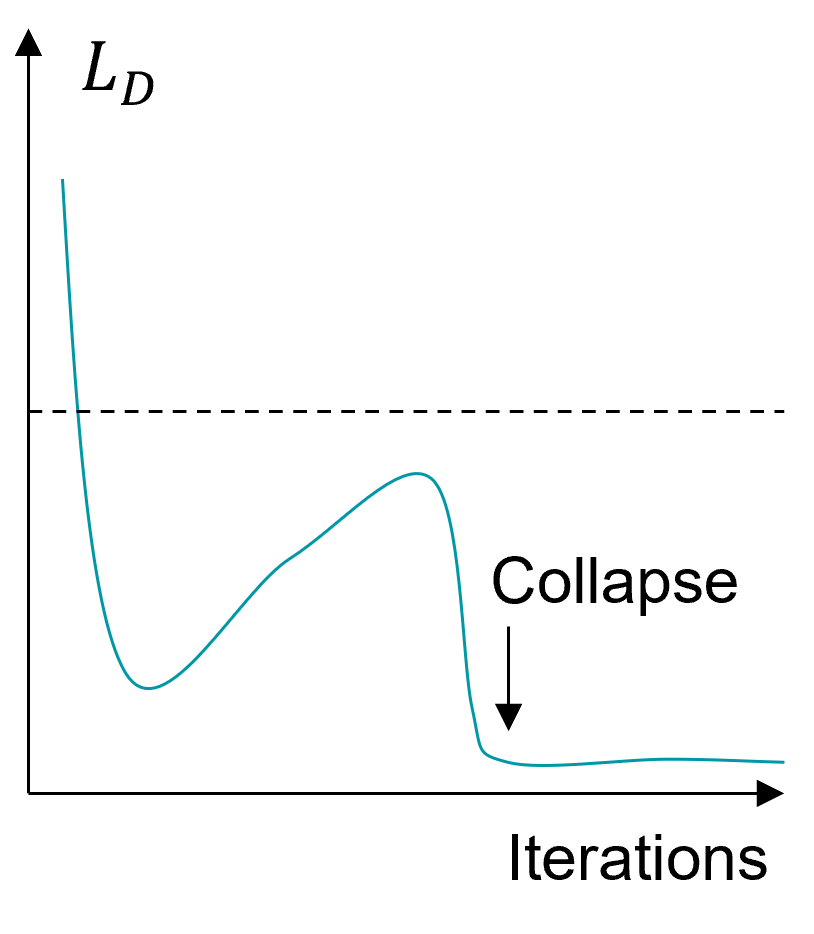}
  \subcaption{Training collapse}
 \end{minipage} 
 \begin{minipage}[b]{0.325\linewidth}
  \centering
  \includegraphics[keepaspectratio, width=0.87\linewidth]{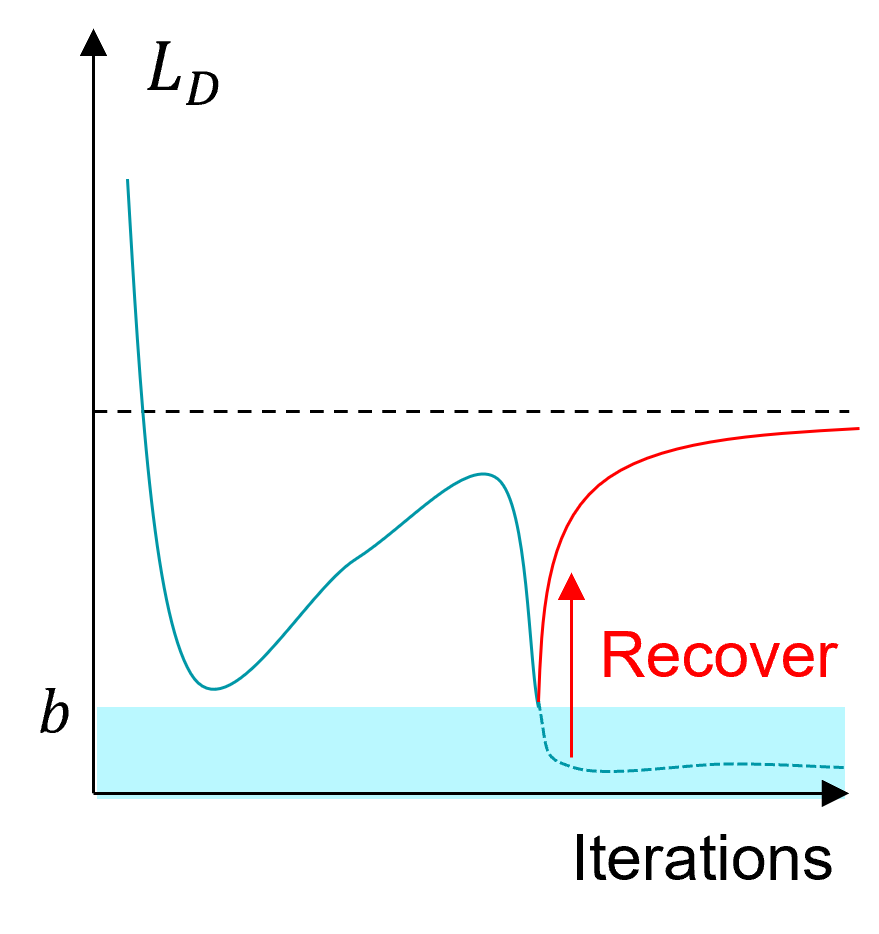}
  \subcaption{Effect of flooding}
 \end{minipage}
 \caption{
 Illustration of the discriminator's loss ($L_{D}$) progression in GANs and the effect of flooding.
 In ideal training (a), the discriminator's loss rises steadily to loss at the convergence (dotted line), and the training will converge.
 On the other hand, when the training collapses (b), the discriminator's loss falls sharply and then stays at a very low value.
 By applying flooding with flood level $b$, we can suppress the rapid decline in the discriminator's loss, thereby stabilizing the training.
 }
 \label{flooding_effect_concept}
\end{figure}

Although GANs successfully tackle image generation with this approach, GAN training suffers from instability. 
Previous work~\cite{arjovsky2017towards,NIPS2016_8a3363ab} pointed out that the training is unstable because of the instability of the discriminator, the discriminator's loss becomes excessively low, and the discriminator overwhelms the generator.
In training, the generator is updated based on the discriminator's predictions.
However, Arjovsky et al.~\cite{arjovsky2017towards} showed theoretically that if the discriminator is always optimal, it leads to a vanishing gradient of the generator and unstable training.

Previous research has proposed methods to solve the instability.
These can be categorized as changes in adversarial losses, regularization, and architectural changes~\cite{10.1145/3439723}.
The adversarial loss is the loss that creates an adversarial structure between the generator and discriminator.
For example, GANs originally used the binary cross entropy loss (BCE loss) as the adversarial loss based on the min-max formula \eqref{eq:minmax}. 
However, previous research~\cite{lim2017geometric,Mao_2017_ICCV,pmlr-v70-arjovsky17a} showed that using the BCE loss causes instability and proposed replacement of the BCE loss.

Another main approach is the addition of a regularization term to the adversarial loss that leads to training stabilization.
It can be combined with changes in adversarial losses without affecting the theoretical convergence ($\mathbb{P}_g = \mathbb{P}_r$).
For example, gradient penalty~\cite{NIPS2017_892c3b1c} adds a regularization term that keeps the gradient norm close to 1, which prevents gradient explosion and vanishing.
WGAN-GP~\cite{NIPS2017_892c3b1c} uses it to enforce Lipschitz constraint.
The loss functions of the discriminator and generator can be divided into the adversarial loss and the addition of regularization terms as
\begin{equation}
\begin{split}
\label{eq:GANs_loss_concept}
    L_{D} &= L_{D,\mathrm{adv}} + \sum_i \lambda_{D,i} L_{D, \mathrm{aux}, i} , \ 
    L_{G} = L_{G,\mathrm{adv}} + \sum_i \lambda_{G,i} L_{G, \mathrm{aux}, i} ,
\end{split}
\end{equation}
where $L_{D,\mathrm{adv}}$ and $L_{G,\mathrm{adv}}$ are the adversarial losses for the discriminator and generator, respectively, and $L_{D,\mathrm{aux},i}$ and $L_{G,\mathrm{aux},i}$ are the $i$th regularization terms with the weight coefficients $\lambda_{D,i}$ and $\lambda_{G,i}$.
The adversarial losses are calculated using the discriminator outputs for real or generated data as
\begin{equation}
\begin{split}
\label{eq:GANs_adv_loss_concept}
    L_{D,\mathrm{adv}} &= L_{D,\text{real}} + L_{D,\text{fake}} 
    = \mathbb{E}_{\bm{x}\sim{\mathbb{P}_{r}}}[f_{D,\text{real}}(\bm{x})] + \mathbb{E}_{\bm{x}\sim{\mathbb{P}_{g}}}[f_{D,\text{fake}}(\bm{x})], 
    L_{G,\mathrm{adv}} = \mathbb{E}_{\bm{x}\sim{\mathbb{P}_{g}}}[f_{G}(\bm{x})],
\end{split}
\end{equation}
where $L_{D,\text{real}}$ and $L_{D,\text{fake}}$ are the discriminator's losses for real and generated data, respectively, and $f_{D,\text{real}}$, $f_{D,\text{fake}}$, and $f_{G}$ are functions from the discriminator's outputs to the losses.
Some studies further improved the stability by combining changes in the type of adversarial losses with the addition of regularization terms.
For example, WGAN-GP~\cite{NIPS2017_892c3b1c} improves performance with gradient penalty as a regularization term and Wasserstein loss~\cite{pmlr-v70-arjovsky17a} as the adversarial loss.
However, adding regularization terms requires tuning $\lambda_{D,aux,i}$ and $\lambda_{G,aux,i}$.

This paper proposes a direct regularization technique for the adversarial loss values to stabilize GAN training.
We explore a new technique that directly prevents the discriminator from becoming too accurate and taking too low loss.
This low-loss state can be regarded as the discriminator's overfitting to the current distribution.
Therefore, we propose the application of \textit{flooding}~\cite{ishida2021need}, a method for preventing overfitting in supervised image classification, to GANs.
In order to prevent an excessive decrease in classification loss $L$ when the prediction model overfits, flooding recalculates loss $h$ given by
\begin{equation}
\begin{split}
\label{eq:flooding_ishida}
    h(L,b)=|L-b|+b .
\end{split}
\end{equation}
Here, $b$ is called the flood level. Due to the absolute value, the gradient will be flipped when the loss becomes smaller than $b$, preventing the overfitting.
Note that the adding back $b$ of Eq.~\eqref{eq:flooding_ishida} does not affect the gradient but ensures that $h(L,b)=L$ where $L\geq b$~\cite{ishida2021need}.
The flood level is a hyperparameter that requires tuning.
Our contributions are as follows.

\begin{enumerate}
\setlength{\parskip}{0cm} %
\setlength{\itemsep}{0cm} %
\item We propose to apply flooding, a simple method that prevents overfitting in supervised learning, to GAN training. It stabilizes the training process, as depicted in Figure \ref{flooding_effect_concept}.
\item Unlike in supervised learning, the discriminator's losses at the training convergence are uniquely determined, as proven theoretically. We introduce a novel approach to set the flood level based on the losses at the training convergence.
\item We demonstrate that flooding stabilizes GAN training experimentally. We also show that flooding is effective when the flood level is not too low and in combination with existing stabilization methods, such as changes in adversarial loss type and architecture. Furthermore, we demonstrate that applying flooding only to either $L_{D,\text{real}}$ or $L_{D,\text{fake}}$ significantly impacts performance, indicating whether the discriminator overfits real or generated data.
\end{enumerate}

\section{Related Work}
We first review stabilization methods for GANs. Next, we show applications of adversarial architectures and methods for overfitting in supervised learning.

\subsection{Stabilization methods for training GANs}
\label{sec:related_work_1}
There are three categories in stabilization methods for the GAN training: change of adversarial loss type, regularization, and change of architectures~\cite{10.1145/3439723}.

\begin{table}[t]
  \begin{center}
    {\small{
\begin{tabular}{llll}
\toprule
Adversarial loss  & $f_{D,\text{real}}$& $f_{D,\text{fake}}$ & $f_{G}$ \\
\midrule
BCE loss~\cite{NIPS2014_5ca3e9b1} & $- \log (D(\bm{x})) $ & $- \log (1-D(G(\bm{z})))$& $ \log (1-D(G(\bm{z})) $ \\
BCE loss (non-saturating)~\cite{NIPS2014_5ca3e9b1} & $- \log (D(\bm{x})) $ & $- \log (1-D(G(\bm{z})))$& $ - \log (D(G(\bm{z})) $ \\
Wasserstein loss~\cite{pmlr-v70-arjovsky17a} & $-D(\bm{x})$ & $D(\bm{x})$ & $-D(G(\bm{z}))$ \\
Hinge loss~\cite{lim2017geometric} & $\max (0, 1-D(\bm{x}))$ & $\max (0, 1+D(G(\bm{z}))$ & $-D(G(\bm{z}))$ \\
Least squares loss~\cite{Mao_2017_ICCV} & $\frac{1}{2} (D(\bm{x})-b_{\mathrm{LS}})^2$ & $\frac{1}{2} (D(G(\bm{z}))-a_{\mathrm{LS}})^2$ & $\frac{1}{2} (D(G(\bm{z}))-c_{\mathrm{LS}})^2$ \\
\bottomrule
\end{tabular}
}}
\end{center}
\caption{
Examples of adversarial loss functions. One of the setting $a_{\mathrm{LS}}$, $b_{\mathrm{LS}}$, and $c_{\mathrm{LS}}$ proposed by Mao et al.~\cite{Mao_2017_ICCV} is $(0,1,1)$.
}
\label{adv_loss_table}
\end{table}

\noindent\textbf{Change of adversarial loss type}
Goodfellow et al.~\cite{NIPS2014_5ca3e9b1} pointed out that the BCE loss can lead to a problem that the discriminator can correctly discriminate the generated data ($D(G(\bm{z}))\simeq 0$), resulting in saturated gradients.
To address it, a non-saturating BCE loss is proposed by modifying $f_G$ shown in Table~\ref{adv_loss_table}.

However, the instability still remains~\cite{arjovsky2017towards,NIPS2016_8a3363ab}.
Goodfellow et al.~\cite{NIPS2014_5ca3e9b1}have shown that the training with the BCE loss minimizes the Jensen-Shannon divergence between the real data distribution $\mathbb{P}_r$ and generated data distribution $\mathbb{P}_g$.
Arjovsky et al.~\cite{pmlr-v70-arjovsky17a} showed that the minimization causes instability, and they proposed Wasserstein loss based on Earth Mover (EM) distance.
Moreover, Lim et al.~\cite{lim2017geometric} proposed a hinge loss, and Mao et al.~\cite{Mao_2017_ICCV} proposed a least squares loss based on minimizing Peason $\chi^2$ distance. 
Table~\ref{adv_loss_table} shows the losses.
These methods can be regarded as integral probability metric (IPM)-based regularization~\cite{10.1145/3439723}, where the generators and discriminators belong to a particular function class, such as models with Lipschitz continuity.

It is crucial to mathematically prove that $\mathbb{P}_g$ at training convergence matches  $\mathbb{P}_r$ when changing the adversarial loss functions~\cite{NIPS2014_5ca3e9b1,lim2017geometric,Mao_2017_ICCV}. 
Note that the proof assumes infinite data from $\mathbb{P}_r$ and ideal models for the training.
Therefore, it cannot be perfectly reproduced in experiments, but it is useful to make theoretical analysis easy and acquire insightful knowledge.
Previous research~\cite{NIPS2014_5ca3e9b1,Mao_2017_ICCV} follows the proof procedure described below.
\begin{enumerate} 
\setlength{\parskip}{0cm} %
\setlength{\itemsep}{0cm} %
\label{GANs_convergence_steps}
    \item Find a discriminator $D^*$, which minimizes the loss for a fixed generator $G$.
    \item Find a generator $G_{\mathrm{opt}}$, which minimizes $D^*$ loss.
\end{enumerate}
Let $L_{D^*}=L_{D^*,real}+L_{D^*,fake}$ denote the loss of $D^*$, $L_{D_{\mathrm{opt}}}$ denote discriminator's loss in step 2 ($D_{\mathrm{opt}}$), and $L_{G_{\mathrm{opt}}}$ denote generator's loss in step 2.
\begin{table}[t]
  \begin{center}
    {\small{
\begin{tabular}{lllll}
\toprule
Adversarial loss & $L_{D_{\mathrm{opt}}, \mathrm{real}}$ & $L_{D_{\mathrm{opt}}, \mathrm{fake}}$ & $L_{D_{\mathrm{opt}}}$ & $L_{G_{\mathrm{opt}}}$ \\
\midrule
BCE loss~\cite{NIPS2014_5ca3e9b1} & $\log 2$ & $\log 2$& - & $-\log 2$ \\
BCE loss (non-saturating)~\cite{NIPS2014_5ca3e9b1} & $\log 2$ & $\log 2$& - & $\log 2$ \\
Wasserstein loss~\cite{pmlr-v70-arjovsky17a} & - & - & $0$ & - \\
Hinge loss~\cite{lim2017geometric} & - & - & $2$ & $1$ \\
Least squares loss~\cite{Mao_2017_ICCV} & $\frac{(a_{\mathrm{LS}}-b_{\mathrm{LS}})^2}{8}$ & $\frac{(a_{\mathrm{LS}}-b_{\mathrm{LS}})^2}{8}$ & - & $\frac{(a_{\mathrm{LS}}+b_{\mathrm{LS}}-2c_{\mathrm{LS}})^2}{8}$ \\
\bottomrule
\end{tabular}
}}
\end{center}
\caption{
The theoretical loss values for each adversarial loss when the training reaches the convergence. 
If $L_{D_{\mathrm{opt}}, \mathrm{real}}$ and $L_{D_{\mathrm{opt}}, \mathrm{fake}}$ cannot be uniquely determined, only $L_{D_{\mathrm{opt}}}$ is shown, and if those can be determined, the apparent $L_{D_{\mathrm{opt}}}$ ($ = L_{D_{\mathrm{opt}}, \mathrm{real}}+L_{D_{\mathrm{opt}}, \mathrm{fake}}$) is omitted.
Also, omit the Wasserstein loss $L_{G_{\mathrm{opt}}}$ since it is not uniquely determined.
}
\label{adv_loss_opt_table}
\end{table}
Table \ref{adv_loss_opt_table} shows $L_{D_{\mathrm{opt}}, \mathrm{real}}$, $L_{D_{\mathrm{opt}}, \mathrm{fake}}$, $L_{D_{\mathrm{opt}}}$, and $L_{G_{\mathrm{opt}}}$ for each adversarial loss. 

\noindent\textbf{Regularization}
There are various regularization techniques to stabilize the GAN training.
One major approach is the addition of regularization terms to the adversarial loss.
This directly prevents gradient explosion and vanishing~\cite{DBLP:journals/corr/KodaliAHK17,petzka2018on}, model overfitting~\cite{brock2018large,pmlr-v70-arjovsky17a}, and mode collapse~\cite{che2017mode}.
Gradient penalty~\cite{NIPS2017_892c3b1c} is an example of adding a regularization term, which improves the discriminator stability by adding a squared error between the gradient norm and $1$ so that the gradient norm approaches $1$.
It contributes to Lipschitz continuity and training stabilization.
These methods require tuning of the coefficients to balance the adversarial loss.
Label smoothing~\cite{NIPS2016_8a3363ab} regularizes through the target labels.
However, its stabilization effect on adversarial losses other than the BCE loss is unknown.
Normalization is another common approach to regularization~\cite{xiang2017effects,miyato2018spectral}. Spectral normalization~\cite{miyato2018spectral}, a representative normalization technique for GANs, stabilizes training with Lipschitz continuity of the discriminator through the normalization of the weight matrices.
Unlike the existing regularization techniques for GANs, our method directly regularizes the adversarial loss.

\noindent\textbf{Change of architecture}
Changing the architecture is a commonly used approach to stabilization.
For example, when generating images with a high resolution, a method that efficiently preserves the entire image features is essential. 
Deep convolutional GAN (DCGAN)~\cite{radford2016unsupervised}, which employs a convolutional layer, and self-atteintion GAN (SAGAN)~\cite{pmlr-v97-zhang19d}, which introduces an attention mechanism, have been proposed.
Some studies~\cite{karras2018progressive,brock2018large,Karras_2019_CVPR,karras2020analyzing} have proposed to generate high-resolution and photo-realistic images by devising architectures. 
These model changes are relatively easy to combine with loss changes because they do not disrupt the competing structure, which is represented by adversarial losses.

\subsection{Various application of adversarial architectures}
Adversarial architectures for an efficient high-dimension distribution generator training have been applied in many research fields.
For instance, Mirza et al.~\cite{mirza2014conditional} proposed conditional GANs to control generated images with labels.
The idea was employed for a wide range of applications, such as text-to-image generation~\cite{stackgan} and image-to-image translation~\cite{pix2pix}.
Additionally, domain-adversarial neural network (DANN)~\cite{dann}, adversarial discriminative domain adaptation (ADDA)~\cite{Tzeng_2017_CVPR}, and Wasserstein distance guided representation learning (WDGRL)~\cite{wdgrl} proposed the use of adversarial frameworks for domain adaptation, which alleviates the gap between source and target domain in classification tasks.
These architectures also have challenges with training multiple models.

\subsection{Methods for overfitting in supervised learning}
In supervised learning, overfitting is a well-known phenomenon in which a model performs well on training data but fails on unknown data.
Typical methods for preventing overfitting in supervised learning are dropout~\cite {JMLR:v15:srivastava14a}, batch normalization~\cite{pmlr-v37-ioffe15}, data augmentation~\cite{Shorten2019}, and label smoothing~\cite{rethinking}.
Many of these techniques can be applied not only to supervised learning but also to other domains, and they are sometimes adopted in GANs as well~\cite{10.1145/3569928}.
Ishida et al.~\cite{ishida2021need} proposed flooding, a method that recalculates the loss based on the formula \eqref{eq:flooding_ishida} with respect to the flood level $b$ so that the loss does not become extremely small and avoids overfitting.
On the other hand, Xie et al.~\cite{xie2022iflood} proposed individual flood (iFlood) to apply flooding before taking the expected value of losses.
The method is effective because it can regularize only the loss of overfitted instances.
While these methods perform well in supervised learning, there is no standard for setting $b$, and it requires a tuning process.

\section{Method}
\label{sec:Method}
In this section, we first provide an overview of GAN training and propose the application of flooding to GANs. In the proposal, we show that there are several ways to apply flooding to GANs and discuss the flood level setting.

\subsection{Overview of GAN training}
GAN training has a generator $G$ and a discriminator $D$ and proceeds based on losses defined in Eq.~\eqref{eq:GANs_loss_concept}. 
This section assumes simple GANs without any regularization term to simplify the discussion.
For example, with the BCE loss~\cite{NIPS2014_5ca3e9b1}, $L_D$ and $L_G$ can be written as 
\begin{equation}
\begin{split}
L_{D} 
=&\mathbb{E}_{\bm{x}\sim \mathbb{P}_r}[- \log (D(\bm{x}))] + \mathbb{E}_{\bm{x} \sim \mathbb{P}_g}[- \log (1-D(\bm{x}))] , 
L_{G} =\mathbb{E}_{\bm{x}\sim \mathbb{P}_g}[\log (1-D(\bm{x}))] .
\end{split}
\end{equation}

\subsection{Application of flooding to GANs}
Instability in GANs can occur when the discriminator's loss is too low, indicating overfitting of the discriminator.
We aim to improve the stability by applying flooding, which is a method for preventing overfitting in supervised learning.
In this case, we need to consider ``how to apply flooding to GANs'' and ``how to set the flood level,'' which are explained in the following sections.

\noindent\textbf{How to apply flooding to GANs}
\label{sec:method_adopt_flooding}
We propose to apply flooding to $L_{D}$ to avoid the overfitting of the discriminator.
There are three ways to apply flooding, depending on the inserting position of the operation $h$ defined in Eq.~\eqref{eq:flooding_ishida},
\begin{equation}
\begin{split}
\label{eq:GANs_adv_loss_flood}
L_{D,\text{flood},1} 
=&\mathbb{E}_{\bm{x}\sim \mathbb{P}_r}[h(f_{D,\text{real}}(D(\bm{x})), b_{\,\text{real}})] + \mathbb{E}_{\bm{x} \sim \mathbb{P}_g}[h(f_{D,\text{fake}}(D(\bm{x})), b_{\,\text{fake}})] ,
\\
L_{D,\text{flood},2} 
=&h(\mathbb{E}_{\bm{x}\sim \mathbb{P}_r}[f_{D,\text{real}}(D(\bm{x}))], b_{\,\text{real}}) + h(\mathbb{E}_{\bm{x} \sim \mathbb{P}_g}[f_{D,\text{fake}}(D(\bm{x}))], b_{\,\text{fake}}) ,
\\
L_{D,\text{flood},3} 
=& h(\mathbb{E}_{\bm{x}\sim \mathbb{P}_r}[f_{D,\text{real}}(D(\bm{x}))] + \mathbb{E}_{\bm{x} \sim \mathbb{P}_g}[f_{D,\text{fake}}(D(\bm{x}))], b_{\,\text{all}}), 
\end{split}
\end{equation}
where $b_{\,\text{real}}$ and $b_{\,\text{fake}}$ are the flood levels for adversarial losses for real and generated data, respectively, and $b_{\,\text{all}}$ is the flood level for the sum of the adversarial losses.
As flooding and iFlood, the difference in the flood level and inserting position can cause performance improvement.

\noindent\textbf{How to set the flood level}
\label{sec:flood_level}
The appropriate setting of the flood level $b_{\,\text{real}}$, $b_{\,\text{fake}}$, and $b_{\,\text{all}}$ is important.
In supervised learning, because the loss at convergence is not uniquely determined but it depends on models and datasets,
the flood level is a hyperparameter, and the appropriate range of it has not yet been shown.
On the other hand, we propose the following hypotheses about the setting of the flood level for GANs by using property that $L_{D_{\mathrm{opt}}, \mathrm{real}}$, $L_{D_{\mathrm{opt}}, \mathrm{fake}}$, or the sum of the losses $L_{D_{\mathrm{opt}}}$ is uniquely determined, as summarized in Table \ref{adv_loss_opt_table}. 

\begin{hyp} 
\label{hyp:standard_1}
If $L_{D_{\mathrm{opt}}, \mathrm{real}}$ and $L_{D_{\mathrm{opt}}, \mathrm{fake}}$ are uniquely determined, then $b_{\,\mathrm{real}}$ and $b_{\,\mathrm{fake}}$ should be set to satisfy two conditions, $b_{\,\mathrm{real}} < L_{D_{\mathrm{opt}}, \mathrm{real}} $ and $b_{\,\mathrm{fake}} < L_{D_{\mathrm{opt}}, \mathrm{fake}}$, for $L_{D,\mathrm{flood},1} $ or $L_{D,\mathrm{flood},2}$.
\end{hyp}
\begin{hyp} 
\label{hyp:standard_2}
If $L_{D_{\mathrm{opt}}}$ is uniquely determined, then $b_{\,\mathrm{real}}$ and $b_{\,\mathrm{fake}}$ should be set to satisfy a condition $b_{\,\mathrm{real}} + b_{\,\mathrm{fake}} < L_{D_{\mathrm{opt}}}$, for $L_{D,\mathrm{flood},1}$ or $L_{D,\mathrm{flood},2}$. Moreover, $b_{\,\mathrm{all}}$ should be set to satisfy a condition $b_{\,\mathrm{all}} < L_{D_{\mathrm{opt}}}$, for $L_{D,\mathrm{flood},3}$.
\end{hyp}

If $L_{D_{\mathrm{opt}}, \mathrm{real}}$ and $L_{D_{\mathrm{opt}}, \mathrm{fake}}$ are uniquely determined, we can suppose excessively low loss for each losses.
This is the inspiration for Hypothesis \ref{hyp:standard_1}.
For example, since $L_{D_{\mathrm{opt}}, \mathrm{real}}$ and $L_{D_{\mathrm{opt}}, \mathrm{fake}}$ are $\log 2$ for GANs with the BCE loss, $b_{\,\text{real}}$ and $b_{\,\text{fake}}$ should be set lower than $\log 2$.
On the other hand, there are cases where $L_{D_{\mathrm{opt}}}$ rather than $L_{D_{\mathrm{opt}}, \mathrm{real}}$ and $L_{D_{\mathrm{opt}}, \mathrm{fake}}$ are uniquely determined as the hinge loss.
In such cases, because $L_{D_{\mathrm{opt}}}$ is expressed as $L_{D_{\mathrm{opt}}, \mathrm{real}} + L_{D_{\mathrm{opt}}, \mathrm{fake}}$, we should set $b_{\,\text{real}}$ and $b_{\,\text{fake}}$ according to the sum.
Moreover, the setting $b_{\,\text{all}}$ should be lower than $L_{D_{\mathrm{opt}}}$ as Hypothesis \ref{hyp:standard_1}.
This is the inspiration for Hypothesis \ref{hyp:standard_2}.

To provide theoretical support for Hypotheses \ref{hyp:standard_1} and \ref{hyp:standard_2}, let us consider the case of GANs with the BCE loss.
For each adversarial loss, $\mathbb{P}_g = \mathbb{P}_r$ at the training convergence is proved following the procedure outlined in Section \ref{sec:related_work_1}.
In the early stages of the training, the difference between $\mathbb{P}_g$ and $\mathbb{P}_r$ is more significant, so $L_{D^*}$ is lower than $L_{D_{\mathrm{opt}}}$ because the discriminator can solve the discrimination task well by its distribution difference.
For example, Goodfellow et al.~\cite{NIPS2014_5ca3e9b1} showed that with the BCE loss and a fixed generator $G$, 
\begin{equation}
\label{eq:GANs_BCE_fixed_G_opt}
\begin{split}
D^*(\bm{x}) = \frac{\mathbb{P}_r(\bm{x})}{\mathbb{P}_r(\bm{x})+\mathbb{P}_g(\bm{x})} .
\end{split}
\end{equation}
It can also be proven that $L_{D^*,\mathrm{real}}$ and $L_{D^*,\mathrm{fake}}$ are smaller than $L_{D_{\mathrm{opt}}, \mathrm{real}}$ and $L_{D_{\mathrm{opt}}, \mathrm{fake}}$.
We can now show the following theorem.
\begin{theorem}
\label{Theorem_1}
In training GANs with the BCE loss based on $L_{D,\textup{flood},1}$, on $Supp(\mathbb{P}_r(x)) \cup Supp(\mathbb{P}_g(x))$,
\begin{equation}
\label{Theorem_1_formula_1}
\begin{aligned}
    D^*(\bm{x}) \in \{ 1-e^{-b_{\,\mathrm{fake}}}, e^{-b_{\,\mathrm{real}}}\},
\end{aligned}
\end{equation}
with $e^{-b_{\,\mathrm{real}}} + e^{-b_{\,\mathrm{fake}}} \leq 1$. 
On the other hand, with $e^{-b_{\,\mathrm{real}}} + e^{-b_{\,\mathrm{fake}}} > 1$,
\begin{equation}
\label{Theorem_1_formula_2}
\begin{split}
  \begin{cases}
    D^*(\bm{x})= \displaystyle\frac{\mathbb{P}_r(\bm{x})}{\mathbb{P}_r(\bm{x})+\mathbb{P}_g(\bm{x})} &
    \text{if\: $\mathbb{P}_g(\bm{x})$ satisfies} \text{ an inequality \eqref{condition},} \\
    D^*(\bm{x}) \in \{ 1-e^{-b_{\,\mathrm{fake}}}, e^{-b_{\,\mathrm{real}}}\} & \text{otherwise,}
  \end{cases}
\end{split}
\end{equation}
where the inequality is 
\begin{equation}
\label{condition}
\begin{aligned}
    1-e^{-b_{\,\mathrm{fake}}} < \frac{\mathbb{P}_r(\bm{x})}{\mathbb{P}_r(\bm{x})+\mathbb{P}_g(\bm{x})} < e^{-b_{\,\mathrm{real}}}.
\end{aligned}
\end{equation}
\end{theorem}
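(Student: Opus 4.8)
The plan is to exploit the fact that $L_{D,\textup{flood},1}$ is an integral whose integrand at each point $\bm{x}$ depends only on the scalar $D(\bm{x})$; since an ideal discriminator can set $D(\bm{x})$ freely and independently at every point, it suffices to minimize the integrand pointwise on $\mathrm{Supp}(\mathbb{P}_r) \cup \mathrm{Supp}(\mathbb{P}_g)$. Writing $d = D(\bm{x})$, $p = \mathbb{P}_r(\bm{x})$, and $q = \mathbb{P}_g(\bm{x})$, and substituting the BCE entries $f_{D,\textup{real}} = -\log d$ and $f_{D,\textup{fake}} = -\log(1-d)$ into Eq.~\eqref{eq:flooding_ishida}, the quantity to minimize (after discarding the additive constants $b_{\text{real}}, b_{\text{fake}}$, which do not affect the argmin) is
\begin{equation*}
F(d) = p\,\bigl|{-\log d} - b_{\text{real}}\bigr| + q\,\bigl|{-\log(1-d)} - b_{\text{fake}}\bigr|.
\end{equation*}
The two absolute values change sign exactly at $d = e^{-b_{\text{real}}}$ and at $d = 1 - e^{-b_{\text{fake}}}$, so these two thresholds partition $(0,1)$ and govern the whole analysis.

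First I would record the slope of $F$ on each of the three resulting subintervals: on the leftmost interval both terms are decreasing so $F' < 0$, on the rightmost interval both are increasing so $F' > 0$, and hence any minimizer must lie in the closed middle interval. The decisive observation is the curvature of $F$ on that middle interval, and this is precisely where the two cases of the theorem separate, because the ordering of the two thresholds is controlled by the sign of $e^{-b_{\text{real}}} + e^{-b_{\text{fake}}} - 1$. When $e^{-b_{\text{real}}} + e^{-b_{\text{fake}}} \le 1$ we have $e^{-b_{\text{real}}} \le 1 - e^{-b_{\text{fake}}}$, and on the middle interval $F(d) = p\log d + q\log(1-d)$ up to constants, which is strictly concave; a strictly concave function attains its minimum on a closed interval only at an endpoint, which together with the monotone outer regions forces $D^*(\bm{x}) \in \{1 - e^{-b_{\text{fake}}}, e^{-b_{\text{real}}}\}$ and yields Eq.~\eqref{Theorem_1_formula_1}.

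In the complementary case $e^{-b_{\text{real}}} + e^{-b_{\text{fake}}} > 1$ the thresholds swap order, and on the middle interval $F(d) = -p\log d - q\log(1-d)$ up to constants, which is strictly convex with a unique stationary point obtained from $F'(d) = -p/d + q/(1-d) = 0$, i.e. $d = p/(p+q) = \mathbb{P}_r(\bm{x})/(\mathbb{P}_r(\bm{x}) + \mathbb{P}_g(\bm{x}))$, recovering the unflooded optimum of Eq.~\eqref{eq:GANs_BCE_fixed_G_opt}. Whether this stationary point is the global minimizer hinges on whether it falls inside the middle interval $\bigl(1 - e^{-b_{\text{fake}}},\, e^{-b_{\text{real}}}\bigr)$, which is exactly inequality \eqref{condition}; if it does, convexity makes it the minimizer, and otherwise $F'$ keeps one sign throughout the interval so the minimum retreats to the endpoint on the side where $p/(p+q)$ lies, again landing in $\{1 - e^{-b_{\text{fake}}}, e^{-b_{\text{real}}}\}$. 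This produces the case split in Eq.~\eqref{Theorem_1_formula_2}.

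I expect the main obstacle to be bookkeeping rather than conceptual depth: correctly resolving the sign of each absolute value on each subinterval so that the concave and convex branches are identified without error, and handling the boundary subcases where $p/(p+q)$ coincides with a threshold (so that the interior stationary point degenerates to an endpoint) without double counting. A secondary point needing care is the justification of pointwise minimization, since outside $\mathrm{Supp}(\mathbb{P}_r) \cup \mathrm{Supp}(\mathbb{P}_g)$ the integrand vanishes and $D^*$ is unconstrained, so the claimed characterization is asserted only on that support.
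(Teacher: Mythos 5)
Your proposal is correct and follows essentially the same route as the paper's proof: pointwise minimization of the integrand, partitioning $(0,1)$ at the two thresholds $e^{-b_{\text{real}}}$ and $1-e^{-b_{\text{fake}}}$, and a case split on their ordering governed by whether $e^{-b_{\text{real}}}+e^{-b_{\text{fake}}}\leq 1$. The only difference is cosmetic: where the paper checks derivative signs on each subinterval and at the endpoints, you invoke strict concavity of the middle branch (Case 1) and strict convexity with the stationary point $\mathbb{P}_r(\bm{x})/(\mathbb{P}_r(\bm{x})+\mathbb{P}_g(\bm{x}))$ (Case 2), which streamlines the bookkeeping but does not change the argument.
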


\begin{proof}
See Supplementary Section \ref{sec:proof_of_theorem_1}.
\end{proof}

The theorem is important in the following three points.
\begin{enumerate}
\setlength{\parskip}{0cm} %
\setlength{\itemsep}{0cm} %
    \item When $e^{-b_{\,\text{real}}} + e^{-b_{\,\text{fake}}} < 1$, the output of $D^*$ becomes a constant, which does not relate to $\mathbb{P}_g$ and $\mathbb{P}_r$, and then training will collapse.
    \item When $e^{-b_{\,\text{real}}} + e^{-b_{\,\text{fake}}} > 1$ and the inequality ~\eqref{condition} is held, the output of $D^*$ is the same as Eq.~\eqref{eq:GANs_BCE_fixed_G_opt}.
    Moreover, if $b_{\,\text{real}} < \log 2$ and $b_{\,\text{fake}} < \log 2$ is satisfied, $e^{-b_{\,\text{real}}} + e^{-b_{\,\text{fake}}} > 1$ and the inequality \eqref{condition} are held where $\mathbb{P}_g = \mathbb{P}_r$. 
    Because the goal of GANs is that the generator satisfies  $\mathbb{P}_g = \mathbb{P}_r$, $b_{\,\text{real}} < \log 2$ and $b_{\,\text{fake}} < \log 2$ is required for GAN training convergence.
    \item When $e^{-b_{\,\text{real}}} + e^{-b_{\,\text{fake}}} > 1$, the discriminator with a higher flood level is more difficult to satisfy the inequality \eqref{condition}.
\end{enumerate}
The details are shown in Supplementary Section \ref{sec:analysis_of_theorem_1}. We assume that higher flood levels increase the dangers mentioned in the third point while lower flood levels diminish the stabilizing effect. 
Therefore, it is necessary to investigate the appropriate setting of the flood level through experiments.

\section{Experiment}
\label{sec:Experiment}
We show experimentally the appropriate application and the effect of flooding on GANs discussed in Section~\ref{sec:Method}.

\subsection{Implementation}
We briefly summarize the implementation below. 
The details are provided in Supplementary Section \ref{sec:details_of_Experiments}.

\noindent\textbf{Synthetic Dataset} 
To examine the effect of flooding for training GANs, we used the ring of 2D Gaussian dataset (2D Ring) as previous research~\cite{lim2017geometric,MDGANs,NIPS2017_892c3b1c,cf-mc-gans,Mao_2017_ICCV}. We did five runs and evaluated the variety and quality of generated samples with `modes' and `high quality (HQ),' proposed in~\cite{VEEGANs}.
For instance, for 2D Ring, it holds that modes $\leq 8$ and HQ $\leq 1$.
In order to confirm how much flooding prevents mode collapse, we consider higher modes as better performance. If modes are the same, we consider one with higher HQ as better performance.

\noindent\textbf{DCGAN} 
We used unconditional DCGAN~\cite{radford2016unsupervised} to evaluate the performance of image generation. 
We used CIFAR10, CIFAR100 at 32$\times$32, STL10 at 64$\times$64, CelebA at both 64$\times$64 and 128$\times$128.
We conducted each experiment five times and evaluated the generated images using Fréchet Inception Distance (FID)~\cite{NIPS2017_8a1d6947}.

\noindent\textbf{Large model}
We conducted experiments using StarGAN V2~\cite{choi2020stargan} to investigate the effect of flooding in the generation of larger images.
We followed the author's implementation and used CelebAHQ.
We conducted each experiment five times and evaluated the generated images with FID and LPIPS~\cite{zhang2018unreasonable}. 

\begin{table}[t]
  \begin{center}
    {\small{
\begin{tabular}{lccccccc}
\toprule
Flooding type & Eval & w/o flooding & Small & Medium & Near Opt & Opt & Over Opt \\
\midrule
1 & Modes & 4.8 (1.2) & 6.6 (0.8) & \textbf{7.8 (0.4)} & 7.0 (0.0) & 2.0 (0.6) & 0.0 (0.0) \\
                       & HQ & 0.90 (0.11) & 0.94 (0.03) & \textbf{0.87 (0.07)} & 0.90 (0.03) & 0.00 (0.00) & 0.00 (0.00) \\
2 & Modes & - & 4.0 (2.0) & 4.2 (1.3) & \textbf{7.0 (0.6)} & 0.2 (0.4) & 0.0 (0.0) \\
                       & HQ & - & 0.65 (0.35) & 0.75 (0.18) & \textbf{0.28 (0.18)} & 0.00 (0.00) & 0.00 (0.00) \\
3 & Modes & - & 4.0 (2.3) & 5.2 (1.9) & \textbf{7.2 (0.4)} & 2.8 (1.6) & 0.0 (0.0) \\
                       & HQ & - & 0.68 (0.35) & 0.91 (0.06) & \textbf{0.24 (0.19)} & 0.01 (0.01) & 0.00 (0.00) \\
\bottomrule
\end{tabular}
}}
\end{center}
\caption{
Average (standard deviation) of modes and HQ with the BCE loss, different flooding types, and flood levels.
}
\label{table:result_synthetic_BCE}
\end{table}

\subsection{How to apply flooding}
\label{sec:flooding_bce}

First, we investigated the stabilizing effect of GAN training by flooding with a synthetic dataset and determined which flooding type in Eq.~\eqref{eq:GANs_adv_loss_flood} is appropriate.
We used the non-saturating BCE loss, which will be referred to as the BCE loss in the latter part of the paper.
Note that the discussion in Section \ref{sec:method_adopt_flooding} does not use $f_G=\log(1-D(G(z)))$, and we can make the same arguments with the non-saturating BCE loss.
To compare a case without flooding and to find the appropriate flood level, we conducted experiments on five different settings (Small, Medium, Near Opt, Opt, Over opt) with condition $b_{\,\text{real}} =b_{\,\text{fake}}$.
The details of the setting are described in Supplementary Section \ref{sec:details_of_Experiments}.

Table \ref{table:result_synthetic_BCE} shows the average and standard deviation of the modes and HQ.
Results without flooding indicate a high HQ (0.90) but low modes (4.8). This suggests that only a few of the eight Gaussian centers are accurately represented, implying that mode collapse occurred.
With flooding type $L_{D,\text{flood},1}$, the results of flood levels at Small, Medium, and Near Opt show that the modes are better and HQ achieves a high value. 
Specifically, with flood level Medium, the generator expresses all of the Gaussian centers in four out of five runs. 
On the other hand, with flooding type $L_{D,\text{flood},2}$ and $L_{D,\text{flood},3}$, the results of flood level Small and Medium are poor in mode, and the result of flood level Near Opt is poor in HQ.
This suggests that instance-level flooding $L_{D,\text{flood},1}$ and flood level under $L_{D_{\mathrm{opt}}, \mathrm{real}}$ and $L_{D_{\mathrm{opt}}, \mathrm{fake}}$ is effective in GAN training.
Because without flooding the discriminator avoids taking losses below the flood level, it also shows that training becomes unstable when the loss is too low at the instance-level.
It is also worth noting that the flood level Near Opt with flooding type $L_{D,\text{flood},1}$ stabilizes the training. 
It suggests that GAN training can progress even if the discriminator loses its ability to take low losses.
As shown in Theorem \ref{Theorem_1}, setting a larger flood level has a drawback that reduces the probability of satisfying the inequality \eqref{condition}. 
However, the experimental result suggests that preventing destabilization is more beneficial than the drawback.
We can also see that the training completely collapses when the flood level $b_{\,\text{real}}$, $b_{\,\text{fake}}$, and $b_{\,\text{all}}$ takes more than $L_{D_{\mathrm{opt}}, \mathrm{real}}$, $L_{D_{\mathrm{opt}}, \mathrm{fake}}$, and $L_{D_{\mathrm{opt}}}$.
As shown in Figure~\ref{fig:synthetic-mode-hq-transition} (a) and (b), compared with the modes and HQ transition without flooding, the collapse with flood level Opt and Over Opt started early on.
It indicates that such a configuration disrupts the GAN training, as discussed in Section \ref{sec:flood_level}.
We provide analysis on the loss and gradient with flooding in Supplementary Section \ref{sec:analysis_loss_gradient}.

\begin{figure}[t]
\begin{center}
\begin{minipage}{0.39\linewidth}
 \begin{minipage}[b]{\linewidth}
  \centering
  \includegraphics[keepaspectratio, width=\linewidth]{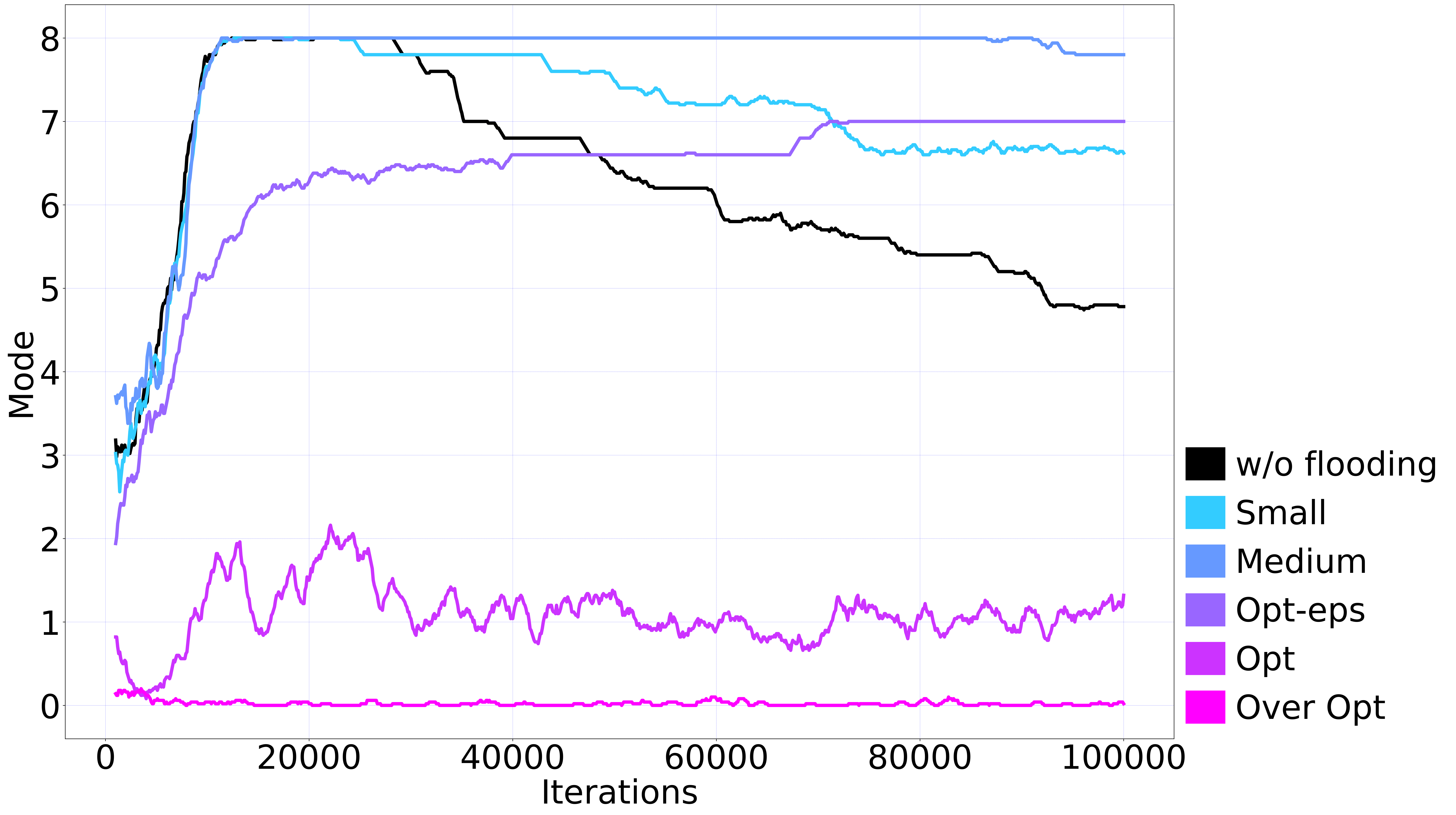}
  \subcaption{Transition of modes}
 \end{minipage} \\
 \begin{minipage}[b]{\linewidth}
  \centering
  \includegraphics[keepaspectratio, width=\linewidth]{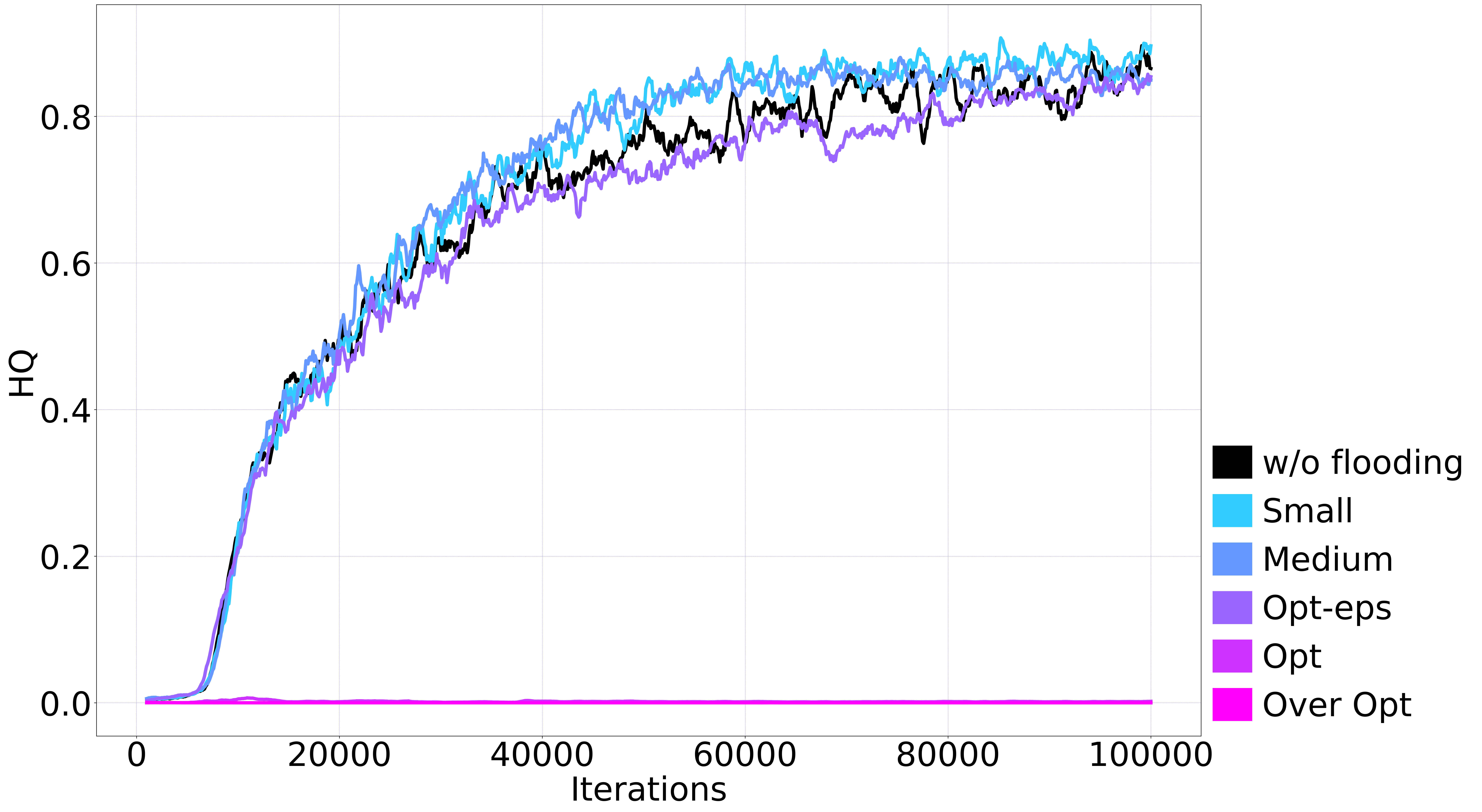}
  \subcaption{Transition of HQ}
 \end{minipage} 
\end{minipage} 
\begin{minipage}{0.6\linewidth}
    \includegraphics[keepaspectratio, width=\linewidth]{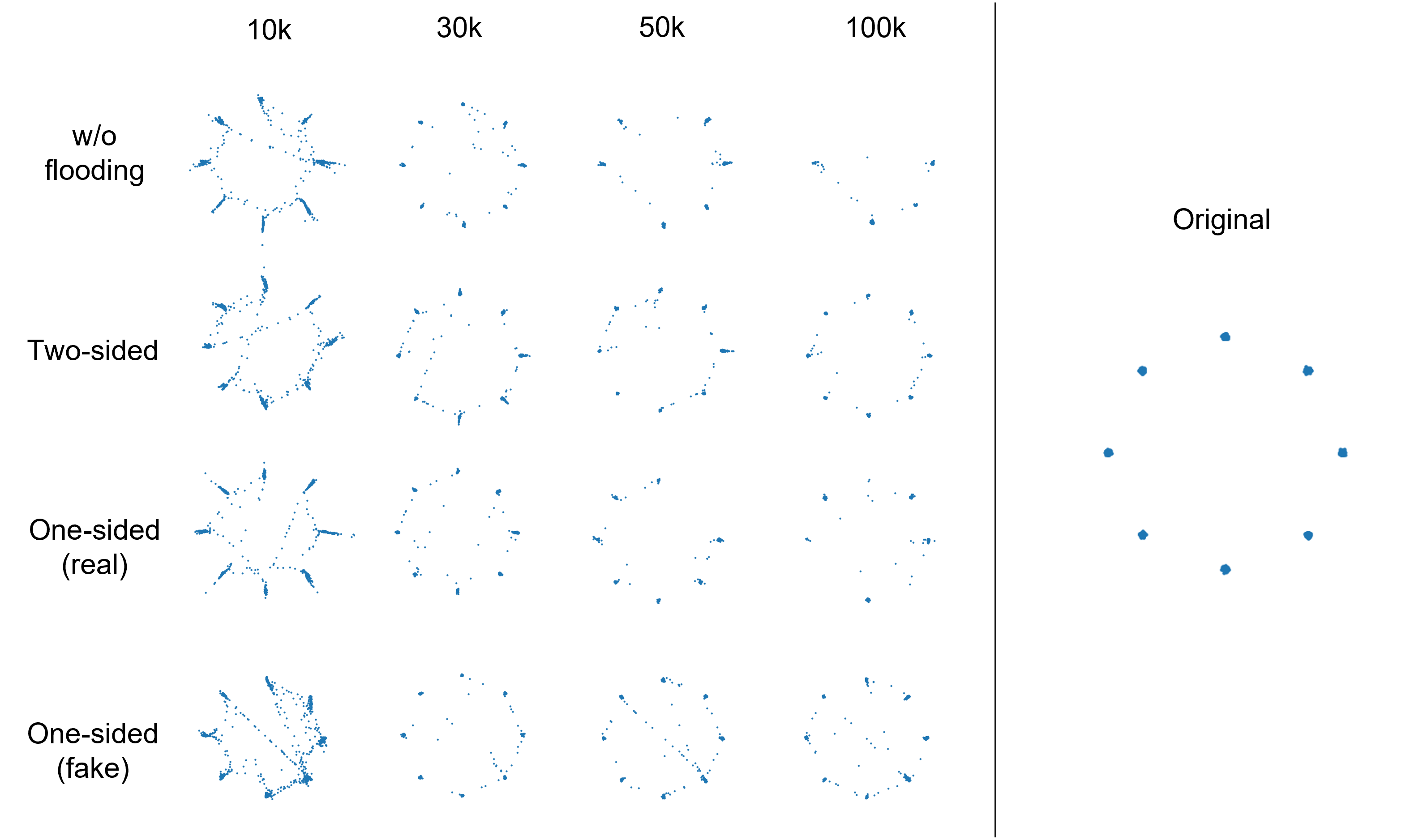}
    \subcaption{Generated samples}
\end{minipage}
\end{center}
\caption{
(a), (b):
Relationship between flood levels and modes and HQ transition with the moving average.
(c):
Transition of 1,000 generated samples. The horizontal axis indicates the iterations.
In the early phase (10k, 30k), the performance of all experiments is comparatively good. However, after the mid-phase (50k), without flooding or with one-sided (real) flooding, some modes disappear.
In contrast, with two-sided flooding or one-sided (fake) flooding, the training collapse is avoided.
}
\label{fig:synthetic-mode-hq-transition}
\end{figure}

\begin{table}[t]\small
  \begin{center}
    {\small{
\begin{tabular}{lccccccc}
\toprule
Adversarial loss & Eval & w/o flooding & Small & Medium & Near Opt & Opt & Over Opt \\
\midrule
BCE loss    & Modes & 4.8 (1.2) & 6.6 (0.8) & \textbf{7.8 (0.4)} & 7.0 (0.0) & 2.0 (0.6) & 0.0 (0.0) \\
            & HQ & 0.90 (0.11) & 0.94 (0.03) & \textbf{0.87 (0.07)} & 0.90 (0.03) & 0.00 (0.00) & 0.00 (0.00) \\
Hinge loss       & Modes & 7.4 (0.8) & 6.6 (0.8) & \textbf{8.0 (0.0)} & 7.4 (0.8) & 0.6 (0.5) & 0.0 (0.0) \\
            & HQ & 0.83 (0.13) & 0.73 (0.20) & \textbf{0.78 (0.04)} & 0.81 (0.07) & 0.00 (0.00) & 0.00 (0.00) \\
Least squares loss         & Modes & 6.6 (0.8) & 6.8 (0.4) & \textbf{7.8 (0.4)} & 7.4 (0.5) & 0.0 (0.0) & 0.0 (0.0) \\
            & HQ & 0.80 (0.09) & 0.79 (0.09) & \textbf{0.90 (0.02)} & 0.78 (0.07) & 0.00 (0.00) & 0.00 (0.00) \\
Wasserstein loss     & Modes & 8.0 (0.0) & \textbf{8.0 (0.0)} & \textbf{8.0 (0.0)} & 0.0 (0.0) & 0.0 (0.0) & 0.0 (0.0) \\
            & HQ & 0.93 (0.01) & \textbf{0.95 (0.01)} & \textbf{0.95 (0.01)} & 0.00 (0.00) & 0.00 (0.00) & 0.00 (0.00) \\
\bottomrule
\end{tabular}
}}
\end{center}
\caption{
Average (standard deviation) of modes and HQ with various adversarial losses and flood levels.
}
\label{table:result_synthetic_adversarial_losses}
\end{table}

\subsection{Flooding for various adversarial losses}
\label{sec:flooding_for_various_adv_losses}
We also tested the effect of flooding with other adversarial losses: the hinge loss, the least squares loss, and the Wasserstein loss with gradient penalty (WGAN-GP).
The flood level setting is provided in Table \ref{table:flooding_level}.

The results are shown in Table \ref{table:result_synthetic_adversarial_losses}.
It indicates that flooding with various adversarial losses and flood level Medium stabilizes the training. 

\begin{table}[t]\small
  \begin{center}
    {\small{
\begin{tabular}{lcccccc}
\toprule
Adversarial loss & Eval & w/o flooding & Two-sided & One-sided (real) & One-sided (fake) & Smoothing \\
\midrule
BCE Loss        & Modes & 4.8 (1.2) & 7.8 (0.4) & 4.0 (1.8) & \textbf{8.0 (0.0)} & 7.4 (0.8) \\
                & HQ & 0.90 (0.11) & 0.87 (0.07) & 0.77 (0.13) & \textbf{0.89 (0.08)} & 0.88 (0.06)\\
Hinge Loss      & Modes & 7.4 (0.8) & 8.0 (0.0) & 5.4 (1.5) & \textbf{8.0 (0.0)} & 7.4 (0.5)\\
                & HQ & 0.83 (0.13) & 0.78 (0.04) & 0.79 (0.05) & \textbf{0.81 (0.06)} & 0.87 (0.05)\\
Least squares loss  & Modes & 6.6 (0.8) & \textbf{7.8 (0.4)} & 5.6 (0.5) & 7.8 (0.4) & 2.6 (0.5)\\
                & HQ & 0.80 (0.09) & \textbf{0.90 (0.02)} & 0.83 (0.08) & 0.85 (0.12) & 0.92 (0.06)\\
Wasserstein loss         & Modes & 8.0 (0.0) & \textbf{8.0 (0.0)} & \textbf{8.0 (0.0)} & 8.0 (0.0) & 0.2 (0.4)\\
                & HQ & 0.93 (0.01) & \textbf{0.95 (0.01)} & \textbf{0.95 (0.01)} & 0.93 (0.02) & 0.00 (0.00)\\
                
\bottomrule
\end{tabular}
}}
\end{center}
\caption{
Average (standard deviation) of modes and HQ with various adversarial losses with one-sided or two-sided flooding with flood level Medium.
}
\label{table:result_synthetic_various_loss_one-sided}
\end{table}

\subsection{One-sided flooding}
\label{sec:one-sided_flooding}
In Section \ref{sec:flooding_bce} we confirmed the effect of flooding with flooding type $L_{D,\text{flood},1}$, where flooding is applied to the discriminator's loss for both real and generated data, we call it `two-sided flooding'.
As a special case, we investigated what happens if flooding was applied only to the loss for real data (`one-sided (real)' flooding) or generated data (`one-sided (fake)' flooding), i.e., when we set only $b_{\mathrm{real}}$ or $b_{\mathrm{fake}}$, respectively.
We did experiments on various adversarial losses with flood level Medium.

Table \ref{table:result_synthetic_various_loss_one-sided} shows the results.
We can see that the performance with one-sided (fake) flooding is comparable to that of two-sided flooding except for the Wasserstein loss.
It is noteworthy that one-sided (real) led to significant performance degradation in the BCE loss, the hinge loss, and the least-squares loss.
It suggests that the discriminator overfits the generated data rather than real data with the setting. 
In other words, although it is difficult for the discriminator to overfit real samples drawn from the true distribution at each iteration, it is easier to overfit the generated samples from the generator whose expressiveness is low.
In Figure~\ref{fig:synthetic-mode-hq-transition} (c), we show the transition of generated samples with the BCE loss.

\subsection{Comparison with label smoothing}

Next, we compared the effect of flooding and label smoothing.
Label smoothing and the proposed method share similarities as stabilization techniques that do not add regularization terms while preserving the type of adversarial losses. 
Label smoothing calculates $L_{D,\text{real}}$ with $a = 0.9$ (recommended by~\cite{NIPS2016_8a3363ab}),
\begin{equation}
\label{eq:smoothing}
\begin{split}
    L_{D,\text{real}} = \mathbb{E}_{\bm{x}\sim \mathbb{P}_r}[&a\cdot f_{D,\text{real}}(D(\bm{x})) +(1-a)\cdot f_{D,\text{fake}}(D(\bm{x}))].
\end{split}
\end{equation}

Table \ref{table:result_synthetic_various_loss_one-sided} shows that the label smoothing has a positive effect only for the BCE loss and the hinge loss, and it worsens the performance for the least squares loss and the Wasserstein loss while flooding performs well for all four losses.

We provide additional experiments in Supplementary Section \ref{sec:additional_synthetic}.

\subsection{Flooding for DCGAN}
\label{sec:flooding_for_DCGAN}

\begin{table}[t]\small
  \begin{center}
    {\small{
\begin{tabular}{lcccccc}
\toprule
(a) Adversarial loss & w/o flooding & Two-sided & One-sided (real)& One-sided (fake) \\
\midrule
BCE loss & 317.4 (129.0) & 236.2 (125.1) & \textbf{112.2 (145.1)} & 392.4 (46.5) \\ 
Hinge loss & 74.4 (33.6) & 237.5 (143.6) & \textbf{44.6 (3.2)} & 368.6 (37.8) \\
Least squares loss & 67.9 (34.8) & 204.8 (82.8) & \textbf{51.2 (13.3)} & 417.8 (109.0) \\
Wasserstein loss & 82.7 (13.5) & 402.6 (35.5) & \textbf{75.7 (13.8)} & 79.6 (3.1) \\
\midrule
(b) Regularization & & & &  \\
\midrule
Gradient Penalty & 214.0 (151.8) & 184.9 (147.9) & \textbf{62.5 (8.8)} & 195.5 (173.2) \\
Spectral Normalization & 270.5 (149.4) & 314.0 (56.9) & \textbf{34.9 (2.5)} & 290.8 (95.9) \\
\bottomrule
\end{tabular}
}}
\end{center}
\caption{
Results with CelebA (128$\times$128) with or without flooding for (a) each adversarial loss and (b) regularization with the BCE loss. 
}
\label{table:flooding_adv_loss_result_celebA128}
\end{table}

\begin{figure}[t]
 \begin{minipage}[b]{0.325\linewidth}
  \centering
  \includegraphics[keepaspectratio, width=0.9\linewidth]{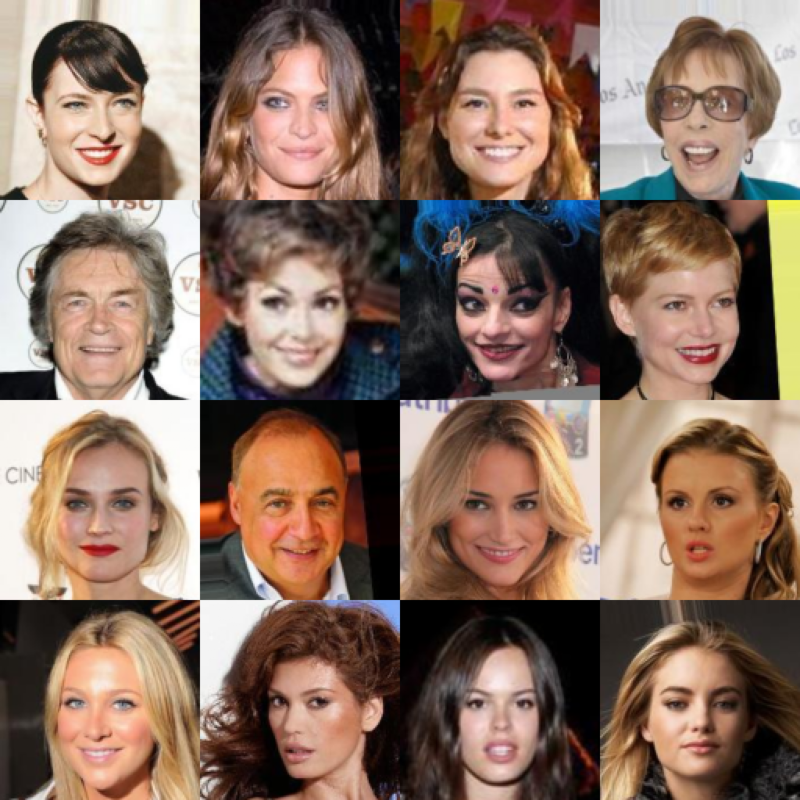}
  \subcaption{Training images}
 \end{minipage}
 \begin{minipage}[b]{0.325\linewidth}
  \centering
  \includegraphics[keepaspectratio, width=0.9\linewidth]{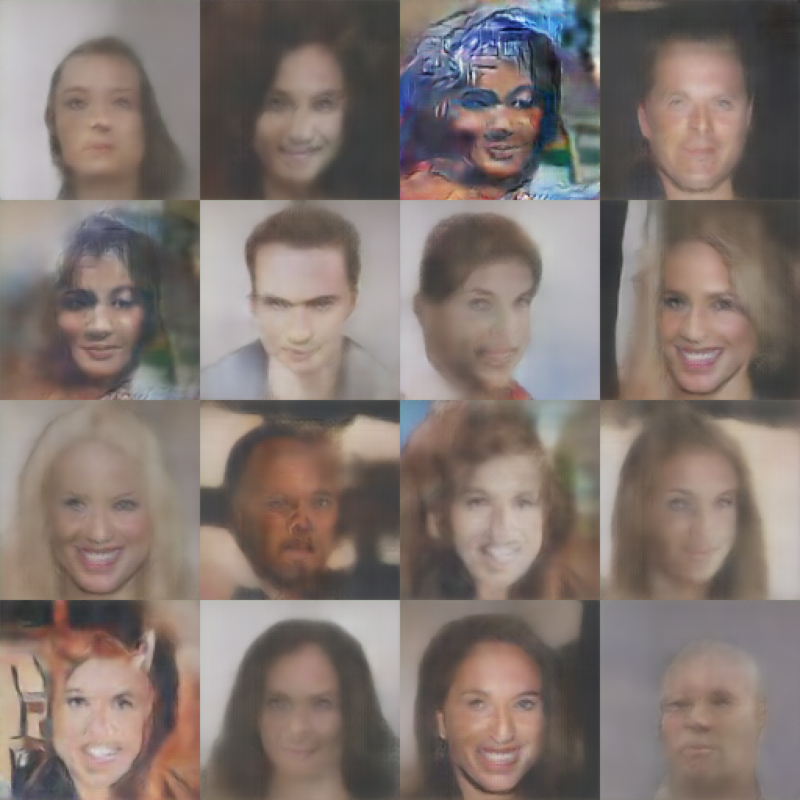}
  \subcaption{Without flooding}
 \end{minipage}
 \begin{minipage}[b]{0.325\linewidth}
  \centering
  \includegraphics[keepaspectratio, width=0.9\linewidth]{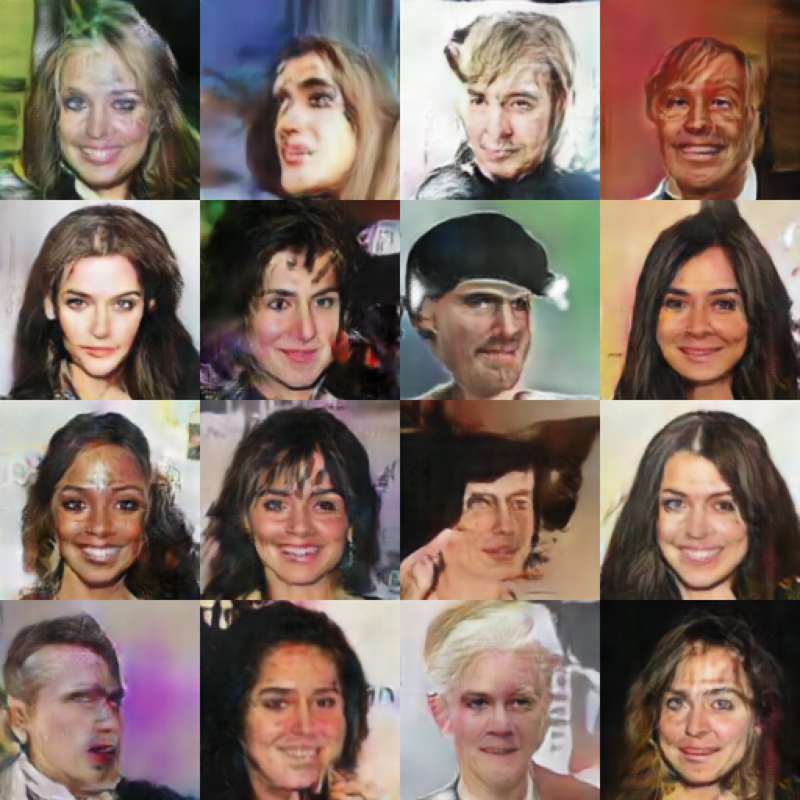}
  \subcaption{With flooding}
 \end{minipage}
 \caption{
 Comparison of generated images with CelebA (128$\times$128) and the BCE loss.
 Without flooding, training completely collapses in four out of five trials with generated images like noise. 
 In one trial, there was no such collapse, but compared to the training images (a), the generated image gradually collapses, for example, the graying ((b), FID: 60.5).
 In contrast, if flooding is applied at the appropriate flood level, the generated images of four trials are not collapsed, as shown in ((c), FID: 38.4).
 }
\label{fig:flooding_bce_images}
\end{figure}

\noindent\textbf{Flooding for various adversarial losses}
\label{sec:flooding_each_loss_exp}
Next, we examined the effect of flooding on image generation tasks.
We use various adversarial losses, DCGAN, and CelebA (128$\times$128) with flood level Medium.

We show the results in Table \ref{table:flooding_adv_loss_result_celebA128} (a) and the generated images in Figure \ref{fig:flooding_bce_images}.
The results show that the combination of one-sided (real) flooding and the change in the type of adversarial loss further stabilizes the training.
On the other hand, it seems that two-sided or one-sided (fake) is not effective in most cases.

\begin{table}[t]
  \begin{center}
    {\small{
\begin{tabular}{lcccc}
\toprule
Dataset & w/o flooding & Two-sided & One-sided (real) & One-sided (fake) \\
\midrule
CIFAR10 & 35.5 (0.8) & 37.2 (1.5) & \textbf{34.3 (0.8)} & 36.5 (1.0) \\
CIFAR100 & 41.7 (1.7) & 38.9 (1.4) & \textbf{36.2 (1.4)} & 41.8 (1.6)\\
STL10 & 154.1 (9.8) & \textbf{133.6 (4.9)} & \textbf{133.6 (3.4)} & 144.3 (13.0) \\
CelebA (64$\times$64) & 91.0 (2.7) & 89.8 (1.4) & \textbf{87.3 (1.1)} & 179.8 (105.6) \\
CelebA (128$\times$128) & 317.4 (129.0) & 236.2 (125.1) & \textbf{112.2 (145.1)} & 392.4 (46.5) \\
\bottomrule
\end{tabular}
}}
\end{center}
\caption{
FID for each dataset with or without flooding. 
}
\label{table:dataset}
\end{table}

\noindent\textbf{Flooding for various datasets}
Next, we verified the stabilization effect of flooding regardless of the dataset.
We conducted experiments on CIFAR10, CIFAR100, and STL10, with DCGAN, the BCE loss, and flood level Medium.

We show the results in Table \ref{table:dataset}.
Although the difficulty of the image generation task is different among the datasets, one-sided (real) flooding is effective on various datasets.
One reason why one-sided (real) flooding is effective is because it prevents the discriminator from memorization in GANs, hypothesized in Section 4.2 of~\cite{brock2018large}.
In other words, if the discriminator memorizes the limited dataset, its loss would sharply drop, but flooding can prevent it.

\subsection{Flooding with other techniques}
\label{sec:flooding_other_techniques}
Additional regularization terms and architectural changes can stabilize the GAN training.
We examined the effect of flooding in combination with spectral normalization and gradient penalty, commonly used as improvement methods, on DCGAN and CelebA (128$\times$128), with the BCE loss.
Note that we used gradient penalty with the BCE loss as `GAN-GP' in~\cite{miyato2018spectral}, while we used it with the Wasserstein loss in Section \ref{sec:flooding_each_loss_exp} , which is necessary for the theoretical proof of training convergence~\cite{NIPS2017_892c3b1c}.

Table \ref{table:flooding_adv_loss_result_celebA128} (b) shows the results.
Even when using the gradient penalty for the BCE loss or spectral normalization alone, they could not prevent the collapse of GAN training on the dataset.
On the other hand, using these regularization with flooding improves the performance, which shows the effect of flooding.

\begin{table}[t]
  \begin{center}
    {\small{
\begin{tabular}{lcccc}
\toprule
Application & w/o flooding & Two-sided & One-sided (real) & One-sided (fake) \\
\midrule
CDCGAN & 90.0 (19.6) & 88.7 (16.2) & \textbf{66.2 (4.9)} & 108.7 (14.3) \\
\midrule
ADDA & 0.60 (0.07) & \textbf{0.76 (0.05)} & 0.70 (0.06) & 0.24 (0.25) \\
DANN & \textbf{0.74 (0.02)} & 0.67 (0.05) & 0.72 (0.03) & 0.71 (0.02) \\
WDGRL & 0.65 (0.08) & 0.57 (0.07) & \textbf{0.67 (0.07)} & 0.66 (0.08) \\
\bottomrule
\end{tabular}
}}
\end{center}
\caption{
Results with or without in combination with various adversarial frameworks. 
We evaluate the performance of conditional generation (CDCGAN) with FID and domain adaptation (ADDA, DANN, and WDGRL) with accuracy for the target data.
}
\label{table:adversarial_result}
\end{table}

\subsection{Flooding with other adversarial application}
\label{sec:flooding_with_adv_app}
We examined the effect of flooding with other adversarial applications, such as conditional GANs~\cite{mirza2014conditional} and domain adaptation.
For the experiments of conditional GANs, we conduct experiments with conditional DCGAN (CDCGAN) and CIFAR10.
To verify the effect of flooding on adversarial learning for domain adaptation, we used ADDA~\cite{Tzeng_2017_CVPR}, DANN~\cite{dann}, and WDGRL~\cite{wdgrl}. We conducted experiments with MNIST (source domain) and MNIST-M (target domain).

Table \ref{table:adversarial_result} shows the result.
First, the results demonstrate that flooding performs well on CDCGAN.
Moreover, it shows that flooding was significantly effective on ADDA, but not on DANN and WDGRL. 
One possible interpretation is that the discriminator only in ADDA causes overfitting.
DANN and WDGRL update the model that generates features from the source domain data during the training, whereas ADDA fixes the model.
Therefore, it's possible that in ADDA, the discriminator overfits fixed source domain features from the fixed model, but flooding prevents it.
Therefore, it supports the effect of flooding in a wide range of adversarial applications to prevent overfitting and increase performance.

\subsection{Flooding with large models}
\label{sec:flooding_large_models}
We verified the effect of flooding in large-scale GANs.
StarGAN V2 ~\cite{choi2020stargan} demonstrates high performance in image-to-image translation.
The generator generates style codes from random latent codes or reference images, and then generates images from source images and the style codes.
We tried one-sided (real) flooding because it performs better for image generation in Section \ref{sec:flooding_for_DCGAN}.

Table \ref{table:stargan-v2} shows the performance.
One-sided (real) flooding improved three out of four metrics.
For the FID (reference) measurement where performance dropped, the variance is significantly larger than that of other measurements. Therefore, we believe its reliability is lower than the others.
Figure \ref{fig:flooding_stargan_images} shows examples of the generated images.

\begin{table}[t]
  \begin{center}
    {\small{
\begin{tabular}{lcccc}

\toprule
 & FID (latent) ($\downarrow$) & LPIPS (latent) ($\uparrow$) & FID (reference) ($\downarrow$) & LPIPS (reference) ($\uparrow$) \\
\midrule
w/o flooding & 13.92 (0.38) & 0.4494 (0.004) & \textbf{23.51 (1.59)} & 0.3886 (0.001) \\
Flooding & \textbf{13.23 (0.37)} & \textbf{0.4555 (0.003)} & 24.64 (0.99) & \textbf{0.3925 (0.003)} \\
\bottomrule
\end{tabular}
}}
\end{center}
\caption{
Results of the image generation  from random latent vector (latent) and reference images (reference) on StarGAN V2.
}
\label{table:stargan-v2}
\end{table}

\begin{figure}[t]
\begin{center}
  \includegraphics[keepaspectratio, width=0.9\linewidth]{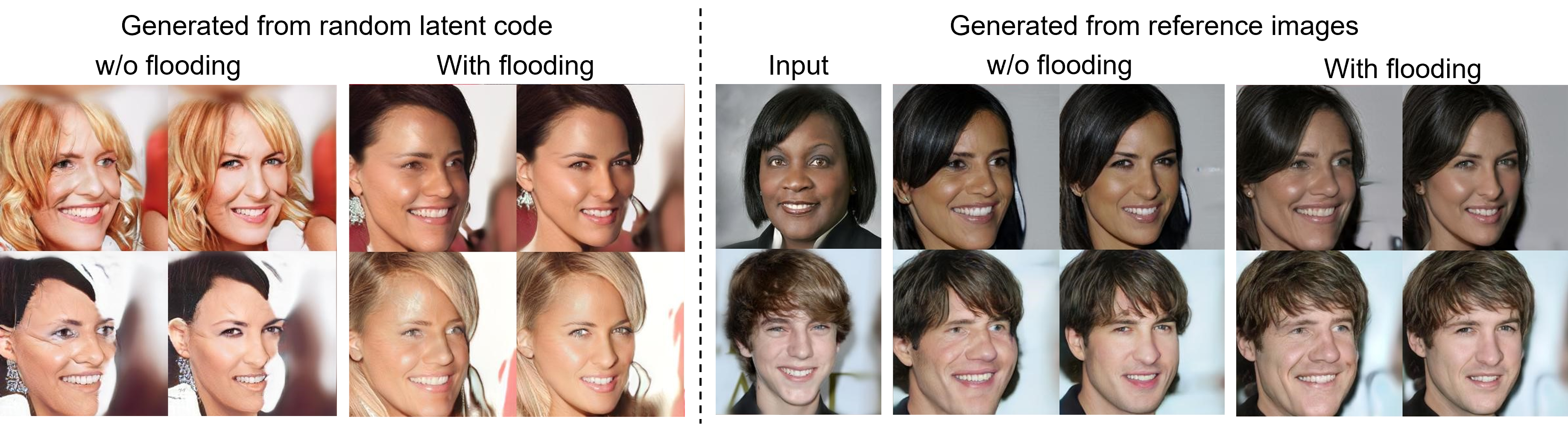}
\end{center}
 \caption{
 Generated images with StarGAN V2 and CelebAHQ.
 The latent vector or reference images are fixed in the rows, while the source images are fixed in the columns.
 }
\label{fig:flooding_stargan_images}
\end{figure}

\subsection{Limitations}
While flooding demonstrates stabilization effects in the training of various GANs, we observed collapsed results with CelebA (128$\times$128) on one for five runs.
Therefore, we should be aware that flooding cannot prevent all instabilities of GAN training.
For example, well-known existing methods, such as gradient penalty and spectral normalization ensure Lipschitz continuity of the discriminator; however, it is beyond the scope of our approach. 
Therefore, flooding should be combined with other stabilization techniques appropriately.

\section{Conclusion and future work}
We proposed to apply flooding, a method for preventing overfitting in supervised learning, to GANs.
Although our proposed method has an additional hyperparameter $b$, we demonstrated how we consider a range for the flood level.
We support the proposal through the theoretical analysis of the relationship between the flood level and the distribution of generated data.
The stabilization effect of flooding and the proposal's validity were demonstrated through experiments.
We also showed that flooding is effective when combined with existing training stabilization methods.

Further investigation is necessary to understand why GAN training with flooding can progress stably.

\section*{Acknowledgment} 
We appreciate Johannes Ackermann for reviewing the paper.

\bibliographystyle{splncs04}
\bibliography{main}

\begin{thebibliography}{10}
\providecommand{\url}[1]{\texttt{#1}}
\providecommand{\urlprefix}{URL }
\providecommand{\doi}[1]{https://doi.org/#1}

\bibitem{arjovsky2017towards}
Arjovsky, M., Bottou, L.: Towards principled methods for training generative adversarial networks. In: ICLR (2017)

\bibitem{pmlr-v70-arjovsky17a}
Arjovsky, M., Chintala, S., Bottou, L.: {W}asserstein generative adversarial networks. In: ICML (2017)

\bibitem{brock2018large}
Brock, A., Donahue, J., Simonyan, K.: Large scale {GAN} training for high fidelity natural image synthesis. In: ICLR (2019)

\bibitem{che2017mode}
Che, T., Li, Y., Jacob, A., Bengio, Y., Li, W.: Mode regularized generative adversarial networks. In: ICLR (2017)

\bibitem{choi2020stargan}
Choi, Y., Uh, Y., Yoo, J., Ha, J.W.: Stargan v2: Diverse image synthesis for multiple domains. In: CVPR (2020)

\bibitem{Demir2018PatchBasedII}
Demir, U., {\"U}nal, G.B.: Patch-based image inpainting with generative adversarial networks. arXiv preprint arXiv:1803.07422  (2018)

\bibitem{MDGANs}
Eghbal-zadeh, H., Zellinger, W., Widmer, G.: Mixture density generative adversarial networks. In: CVPR (2019)

\bibitem{dann}
Ganin, Y., Ustinova, E., Ajakan, H., Germain, P., Larochelle, H., Laviolette, F., March, M., Lempitsky, V.: Domain-adversarial training of neural networks. Journal of Machine Learning Research  \textbf{17}(59),  1--35 (2016)

\bibitem{NIPS2014_5ca3e9b1}
Goodfellow, I., Pouget-Abadie, J., Mirza, M., Xu, B., Warde-Farley, D., Ozair, S., Courville, A., Bengio, Y.: Generative adversarial nets. In: NeurIPS (2014)

\bibitem{NIPS2017_892c3b1c}
Gulrajani, I., Ahmed, F., Arjovsky, M., Dumoulin, V., Courville, A.C.: Improved training of wasserstein {GAN}s. In: NeurIPS (2017)

\bibitem{NIPS2017_8a1d6947}
Heusel, M., Ramsauer, H., Unterthiner, T., Nessler, B., Hochreiter, S.: Gans trained by a two time-scale update rule converge to a local nash equilibrium. In: NeurIPS (2017)

\bibitem{pmlr-v37-ioffe15}
Ioffe, S., Szegedy, C.: Batch normalization: Accelerating deep network training by reducing internal covariate shift. In: ICML (2015)

\bibitem{ishida2021need}
Ishida, T., Yamane, I., Sakai, T., Niu, G., Sugiyama, M.: Do we need zero training loss after achieving zero training error? In: ICML (2020)

\bibitem{pix2pix}
Isola, P., Zhu, J.Y., Zhou, T., Efros, A.A.: Image-to-image translation with conditional adversarial networks. In: CVPR (2017)

\bibitem{karras2018progressive}
Karras, T., Aila, T., Laine, S., Lehtinen, J.: Progressive growing of {GAN}s for improved quality, stability, and variation. In: ICLR (2018)

\bibitem{Karras_2019_CVPR}
Karras, T., Laine, S., Aila, T.: A style-based generator architecture for generative adversarial networks. In: CVPR (2019)

\bibitem{karras2020analyzing}
Karras, T., Laine, S., Aittala, M., Hellsten, J., Lehtinen, J., Aila, T.: Analyzing and improving the image quality of stylegan. In: CVPR (2020)

\bibitem{DBLP:journals/corr/KodaliAHK17}
Kodali, N., Abernethy, J.D., Hays, J., Kira, Z.: How to train your {DRAGAN}. CoRR  \textbf{abs/1705.07215} (2017), \url{http://arxiv.org/abs/1705.07215}

\bibitem{8099502}
Ledig, C., Theis, L., Huszar, F., Caballero, J., Cunningham, A., Acosta, A., Aitken, A., Tejani, A., Totz, J., Wang, Z., Shi, W.: Photo-realistic single image super-resolution using a generative adversarial network. In: CVPR (2017)

\bibitem{10.1145/3569928}
Li, Z., Usman, M., Tao, R., Xia, P., Wang, C., Chen, H., Li, B.: A systematic survey of regularization and normalization in gans. ACM Comput. Surv.  \textbf{55}(11) (2023)

\bibitem{lim2017geometric}
Lim, J.H., Ye, J.C.: Geometric gan. arXiv preprint arXiv:1705.02894  (2017)

\bibitem{Mao_2017_ICCV}
Mao, X., Li, Q., Xie, H., Lau, R.Y., Wang, Z., Paul~Smolley, S.: Least squares generative adversarial networks. In: ICCV (2017)

\bibitem{mirza2014conditional}
Mirza, M., Osindero, S.: Conditional generative adversarial nets. arXiv preprint arXiv:1411.1784  (2014)

\bibitem{miyato2018spectral}
Miyato, T., Kataoka, T., Koyama, M., Yoshida, Y.: Spectral normalization for generative adversarial networks. In: ICLR (2018)

\bibitem{park2019SPADE}
Park, T., Liu, M.Y., Wang, T.C., Zhu, J.Y.: Semantic image synthesis with spatially-adaptive normalization. In: CVPR (2019)

\bibitem{petzka2018on}
Petzka, H., Fischer, A., Lukovnikov, D.: On the regularization of wasserstein {GAN}s. In: ICLR (2018)

\bibitem{radford2016unsupervised}
Radford, A., Metz, L., Chintala, S.: Unsupervised representation learning with deep convolutional generative adversarial networks. arXiv preprint arXiv:1511.06434  (2016)

\bibitem{NIPS2016_8a3363ab}
Salimans, T., Goodfellow, I., Zaremba, W., Cheung, V., Radford, A., Chen, X., Chen, X.: Improved techniques for training {GANs}. In: NeurIPS (2016)

\bibitem{wdgrl}
Shen, J., Qu, Y., Zhang, W., Yu, Y.: Wasserstein distance guided representation learning for domain adaptation. AAAI'18/IAAI'18/EAAI'18, AAAI Press (2018)

\bibitem{Shorten2019}
Shorten, C., Khoshgoftaar, T.M.: A survey on image data augmentation for deep learning. Journal of Big Data  \textbf{6}(1), ~60 (2019)

\bibitem{VEEGANs}
Srivastava, A., Valkov, L., Russell, C., Gutmann, M.U., Sutton, C.: Veegan: Reducing mode collapse in gans using implicit variational learning. In: Guyon, I., Luxburg, U.V., Bengio, S., Wallach, H., Fergus, R., Vishwanathan, S., Garnett, R. (eds.) NeurIPS. vol.~30. Curran Associates, Inc. (2017)

\bibitem{JMLR:v15:srivastava14a}
Srivastava, N., Hinton, G., Krizhevsky, A., Sutskever, I., Salakhutdinov, R.: Dropout: A simple way to prevent neural networks from overfitting. Journal of Machine Learning Research  \textbf{15}(56),  1929--1958 (2014), \url{http://jmlr.org/papers/v15/srivastava14a.html}

\bibitem{rethinking}
Szegedy, C., Vanhoucke, V., Ioffe, S., Shlens, J., Wojna, Z.: Rethinking the inception architecture for computer vision. In: CVPR (2016)

\bibitem{cf-mc-gans}
Thanh-Tung, H., Tran, T.: Catastrophic forgetting and mode collapse in gans. In: IJCNN. pp. 1--10 (2020)

\bibitem{Tzeng_2017_CVPR}
Tzeng, E., Hoffman, J., Saenko, K., Darrell, T.: Adversarial discriminative domain adaptation. In: CVPR (2017)

\bibitem{10.1145/3439723}
Wang, Z., She, Q., Ward, T.E.: Generative adversarial networks in computer vision: A survey and taxonomy. ACM Comput. Surv.  \textbf{54}(2) (feb 2021)

\bibitem{xiang2017effects}
Xiang, S., Li, H.: On the effects of batch and weight normalization in generative adversarial networks. arXiv preprint arXiv:1704.03971  (2017)

\bibitem{xie2022iflood}
Xie, Y., WANG, Z., Li, Y., Zhang, C., Zhou, J., Ding, B.: iflood: A stable and effective regularizer. In: ICLR (2022)

\bibitem{pmlr-v97-zhang19d}
Zhang, H., Goodfellow, I., Metaxas, D., Odena, A.: Self-attention generative adversarial networks. In: ICML (2019)

\bibitem{stackgan}
Zhang, H., Xu, T., Li, H., Zhang, S., Wang, X., Huang, X., Metaxas, D.: Stackgan: Text to photo-realistic image synthesis with stacked generative adversarial networks. In: ICCV (2017)

\bibitem{zhang2018unreasonable}
Zhang, R., Isola, P., Efros, A.A., Shechtman, E., Wang, O.: The unreasonable effectiveness of deep features as a perceptual metric. In: CVPR (2018)

\bibitem{CycleGAN2017}
Zhu, J.Y., Park, T., Isola, P., Efros, A.A.: Unpaired image-to-image translation using cycle-consistent adversarial networks. In: ICCV (2017)

\end{thebibliography}

\newpage

\appendix

\section{Proof of Theorem 1}
\label{sec:proof_of_theorem_1}

\renewcommand{\proofname}{\textit{Proof}}
\begin{proof}
The discriminator's loss $L_{D,\text{flood},1}$ with the BCE loss is expressed as
\begin{eqnarray}
\begin{aligned}
L_{D,flood,1} =&\mathbb{E}_{\bm{x}\sim \mathbb{P}_r}[h(f_{D,\mathrm{real}}(D(\bm{x})), b_{\,\text{real}})] + \mathbb{E}_{\bm{x} \sim \mathbb{P}_g}[h(f_{D,\mathrm{fake}}(D(\bm{x})), b_{\,\text{fake}})] \\
=&\mathbb{E}_{\bm{x}\sim \mathbb{P}_r}[h(-\log(D(\bm{x})), b_{\,\text{real}})] + \mathbb{E}_{\bm{x} \sim \mathbb{P}_g}[h(-\log(1-D(\bm{x})), b_{\,\text{fake}})] \\
=&\int \mathbb{P}_r(\bm{x})[h(-\log(D(\bm{x})), b_{\,\text{real}})] +\mathbb{P}_g(\bm{x})[h(-\log(1-D(\bm{x})), b_{\,\text{fake}})] \mathrm{d}\bm{x}. \\
\end{aligned}
\end{eqnarray}
We introduce $f(D(\bm{x}))$ to examine the relationship between $D$ and $L_{D,flood,1}$ shown as
\begin{eqnarray}
\begin{aligned}
f(D(\bm{x})) =& \mathbb{P}_r(\bm{x})[h(-\log(D(\bm{x})), b_{\,\text{real}})] +\mathbb{P}_g(\bm{x})[h(-\log(1-D(\bm{x})), b_{\,\text{fake}})].
\end{aligned}
\end{eqnarray}
We demonstrate minimization of $f(D(\bm{x}))$ in two cases with respect to whether $e^{-b_{\,\text{real}}} + e^{-b_{\,\text{fake}}} \leq 1 $ is satisfied or not.

\noindent\textbf{Case 1.} If the flood level satisfies $e^{-b_{\,\text{real}}} + e^{-b_{\,\text{fake}}} \leq 1 $, $b_{\,\text{real}}$ and $b_{\,\text{fake}}$ satisfy
\begin{eqnarray}
\begin{aligned}
e^{-b_{\,\text{real}}} \leq 1 - e^{-b_{\,\text{fake}}}
\end{aligned}
\end{eqnarray}
We divide $D(\bm{x})\in (0,1)$ to three intervals with respect to $e^{-b_{\,\text{real}}}$ and $1 - e^{-b_{\,\text{fake}}}$.

\noindent\textbf{Case 1(a).} If we assume $0<D(\bm{x})\leq e^{-b_{\,\text{real}}}$, we can transform $f(D(\bm{x}))$ as
\begin{eqnarray}
\begin{aligned}
\label{eq:f_1-1}
f(D(\bm{x})) =& \mathbb{P}_r(\bm{x})[-\log(D(\bm{x}))] +\mathbb{P}_g(\bm{x})[\log(1-D(\bm{x}))+2b_{\,\text{fake}}]. \\
\end{aligned}
\end{eqnarray}
Therefore, the derivative with respect to $D(\bm{x})$ is expressed as
\begin{eqnarray}
\begin{aligned}
\label{eq:f_diff_1-1}
f'(D(\bm{x})) =& -\frac{\mathbb{P}_r(\bm{x})}{D(\bm{x})} - \frac{\mathbb{P}_g(\bm{x})}{1-D(\bm{x})}.
\end{aligned}
\end{eqnarray}
We obtain $f'(D(\bm{x})) < 0$ because of $(\mathbb{P}_r(\bm{x}), \mathbb{P}_g(\bm{x})) \in \mathbb{R}_{\geq 0}^2 \ \backslash \ \{(0,0)\} $ and $D(\bm{x}) \in (0,1)$.
Therefore, $f(D(\bm{x}))$ in the interval $0<D(\bm{x}) \leq e^{-b_{\,\text{real}}}$ monotonically decrease, and $D(\bm{x})=e^{-b_{\,\text{real}}}$ gives the minimum of $f(D(\bm{x}))$ in the interval.

\noindent\textbf{Case 1(b).} If we assume $e^{-b_{\,\text{real}}}\leq D(\bm{x})\leq 1-e^{-b_{\,\text{fake}}}$, we can obtain $f(D(\bm{x}))$ and $f'(D(\bm{x}))$ as 
\begin{eqnarray}
\begin{aligned}
f(D(\bm{x})) =& \mathbb{P}_r(\bm{x})[\log(D(\bm{x}))+2b_{\,\text{real}}] +\mathbb{P}_g(\bm{x})[\log(1-D(\bm{x}))+2b_{\,\text{fake}}], 
\ f'(D(\bm{x})) = \frac{\mathbb{P}_r(\bm{x})}{D(\bm{x})} - \frac{\mathbb{P}_g(\bm{x})}{1-D(\bm{x})}.
\end{aligned}
\end{eqnarray}
Now, we can obtain $D(\bm{x})$ where $f'(D(\bm{x}))=0$ as
\begin{eqnarray}
\begin{aligned}
 D(\bm{x}) =\frac{\mathbb{P}_r(\bm{x})}{\mathbb{P}_r(\bm{x})+\mathbb{P}_g(\bm{x})}.
\end{aligned}
\end{eqnarray}
Moreover, $f'(e^{-b_{\,\text{real}}})$ and $f'(1-e^{-b_{\,\text{fake}}})$ are expressed as
\begin{eqnarray}
\begin{aligned}
\label{eq:f_diff}
f'(e^{-b_{\,\text{real}}})=& \frac{\mathbb{P}_r(\bm{x})}{e^{-b_{\,\text{real}}}}-\frac{\mathbb{P}_g(\bm{x})}{1-e^{-b_{\,\text{real}}}} 
=\frac{(1-e^{-b_{\,\text{real}}})\mathbb{P}_r(\bm{x})-e^{-b_{\,\text{real}}}\mathbb{P}_g(\bm{x})}{e^{-b_{\,\text{real}}}(1-e^{-b_{\,\text{real}}})} \\
=&\frac{\mathbb{P}_r(\bm{x})-e^{-b_{\,\text{real}}}(\mathbb{P}_r(\bm{x}) + \mathbb{P}_g(\bm{x}))}{e^{-b_{\,\text{real}}}(1-e^{-b_{\,\text{real}}})},  \\
f'(1-e^{-b_{\,\text{fake}}})=& \frac{\mathbb{P}_r(\bm{x})}{1-e^{-b_{\,\text{fake}}}}-\frac{\mathbb{P}_g(\bm{x})}{e^{-b_{\,\text{fake}}}} 
=\frac{e^{-b_{\,\text{fake}}}\mathbb{P}_r(\bm{x})-(1-e^{-b_{\,\text{fake}}})\mathbb{P}_g(\bm{x})}{e^{-b_{\,\text{fake}}}(1-e^{-b_{\,\text{fake}}})} \\ 
=&\frac{\mathbb{P}_r(\bm{x})-(1-e^{-b_{\,\text{fake}}})(\mathbb{P}_r(\bm{x}) + \mathbb{P}_g(\bm{x}))}{e^{-b_{\,\text{fake}}}(1-e^{-b_{\,\text{fake}}})}. \\
\end{aligned}
\end{eqnarray}
If $\mathbb{P}_g(\bm{x})$ satisfies an inequality 
\begin{eqnarray}
\begin{aligned}
e^{-b_{\,\text{real}}} < \frac{\mathbb{P}_r(\bm{x})}{\mathbb{P}_r(\bm{x})+\mathbb{P}_g(\bm{x})} <1-e^{-b_{\,\text{fake}}},
\end{aligned}
\end{eqnarray}
we obtain $f'(e^{-b_{\,\text{real}}}) > 0$ and $f'(1-e^{-b_{\,\text{fake}}}) < 0$.
Therefore, $D(\bm{x}) \in \{ 1-e^{-b_{\,\text{fake}}}, e^{-b_{\,\text{real}}}\}$ gives the minimum.
If $\mathbb{P}_g(\bm{x})$ does not satisfy the inequality, $f(D(\bm{x}))$ monotonically increase or decrease in the interval, and $D(\bm{x}) \in \{ 1-e^{-b_{\,\text{fake}}}, e^{-b_{\,\text{real}}}\}$ gives the minimum.

\noindent\textbf{Case 1(c).} If we assume $1-e^{-b_{\,\text{fake}}}\leq D(\bm{x})<1$, we can obtain $f(D(\bm{x}))$ and $f'(D(\bm{x}))$ as 
\begin{eqnarray}
\begin{aligned}
f(D(\bm{x})) =& \mathbb{P}_r(\bm{x})[\log(D(\bm{x}))+2b_{\,\text{real}}] +\mathbb{P}_g(\bm{x})[-\log(1-D(\bm{x}))], 
\ f'(D(\bm{x})) = \frac{\mathbb{P}_r(\bm{x})}{D(\bm{x})} + \frac{\mathbb{P}_g(\bm{x})}{1-D(\bm{x})}.
\end{aligned}
\end{eqnarray}
As Case 1(a), we obtain $f'(D(\bm{x})) > 0$. 
Therefore, $f(D(\bm{x}))$ monotonically increase, and $D(\bm{x})=1-e^{-b_{\,\text{fake}}}$ gives the minimum of $f(D(\bm{x}))$ in the interval.

Finally, from Case 1(a), 1(b), and 1(c), we can prove Eq.~\eqref{Theorem_1_formula_1}.

\noindent\textbf{Case 2.} If the flood level satisfies $e^{-b_{\,\text{real}}} + e^{-b_{\,\text{fake}}} > 1 $, $b_{\,\text{real}}$ and $b_{\,\text{fake}}$ satisfy
\begin{eqnarray}
\begin{aligned}
  1-e^{-b_{\,\text{fake}}} < e^{-b_{\,\text{real}}},
\end{aligned}
\end{eqnarray}
and we can divide $(0,1)$ to three intervals with respect to $1-e^{-b_{\,\text{fake}}}$ and $e^{-b_{\,\text{real}}}$. 

\noindent\textbf{Case 2(a).} If we assume $0<D(\bm{x})\leq 1-e^{-b_{\,\text{fake}}}$, we have
\begin{eqnarray}
\begin{aligned}
f(D(\bm{x})) =& \mathbb{P}_r(\bm{x})[-\log(D(\bm{x}))] +\mathbb{P}_g(\bm{x})[\log(1-D(\bm{x}))+2b_{\,\text{fake}}], 
\ f'(D(\bm{x})) = -\frac{\mathbb{P}_r(\bm{x})}{D(\bm{x})} - \frac{\mathbb{P}_g(\bm{x})}{1-D(\bm{x})}.
\end{aligned}
\end{eqnarray}
As Case 1(a), we obtain  $f'(D(\bm{x}))<0$ in the interval.
Therefore, $D(\bm{x})=1-e^{-b_{\,\text{fake}}}$ gives the minimum of $f(D(\bm{x}))$ in the interval.

\noindent\textbf{Case 2(b).} If we assume $1-e^{-b_{\,\text{fake}}}\leq D(\bm{x}) \leq e^{-b_{\,\text{real}}}$, we have
\begin{eqnarray}
\begin{aligned}
f(D(\bm{x})) =& \mathbb{P}_r(\bm{x})[-\log(D(\bm{x}))] +\mathbb{P}_g(\bm{x})[-\log(1-D(\bm{x}))], 
\ f'(D(\bm{x})) = -\frac{\mathbb{P}_r(\bm{x})}{D(\bm{x})} + \frac{\mathbb{P}_g(\bm{x})}{1-D(\bm{x})}.
\end{aligned}
\end{eqnarray}
Therefore, $D(\bm{x}) =\displaystyle\frac{\mathbb{P}_r(\bm{x})}{\mathbb{P}_r(\bm{x})+\mathbb{P}_g(\bm{x})}$ gives $f'(D(\bm{x}))=0$ and satisfies $0 < D(\bm{x}) < 1$.
We can calculate $f'(1-e^{-b_{\,\text{fake}}})$ and $f'(e^{-b_{\,\text{real}}})$ as
\begin{eqnarray}
\begin{aligned}
\label{eq:f_diff_2}
f'(e^{-b_{\,\text{real}}})=& -\frac{\mathbb{P}_r(\bm{x})}{e^{-b_{\,\text{real}}}}+\frac{\mathbb{P}_g(\bm{x})}{1-e^{-b_{\,\text{real}}}} 
= -\frac{(1-e^{-b_{\,\text{real}}})\mathbb{P}_r(\bm{x})-e^{-b_{\,\text{real}}}\mathbb{P}_g(\bm{x})}{e^{-b_{\,\text{real}}}(1-e^{-b_{\,\text{real}}})} \\
=& -\frac{\mathbb{P}_r(\bm{x})-e^{-b_{\,\text{real}}}(\mathbb{P}_r(\bm{x}) + \mathbb{P}_g(\bm{x}))}{e^{-b_{\,\text{real}}}(1-e^{-b_{\,\text{real}}})},  \\
f'(1-e^{-b_{\,\text{fake}}})=& -\frac{\mathbb{P}_r(\bm{x})}{1-e^{-b_{\,\text{fake}}}}+\frac{\mathbb{P}_g(\bm{x})}{e^{-b_{\,\text{fake}}}} 
= -\frac{e^{-b_{\,\text{fake}}}\mathbb{P}_r(\bm{x})-(1-e^{-b_{\,\text{fake}}})\mathbb{P}_g(\bm{x})}{e^{-b_{\,\text{fake}}}(1-e^{-b_{\,\text{fake}}})} \\
=& -\frac{\mathbb{P}_r(\bm{x})-(1-e^{-b_{\,\text{fake}}})(\mathbb{P}_r(\bm{x}) + \mathbb{P}_g(\bm{x}))}{e^{-b_{\,\text{fake}}}(1-e^{-b_{\,\text{fake}}})}. \\
\end{aligned}
\end{eqnarray}
Now we obtain $f'(1-e^{-b_{\,\text{fake}}})<0$ and $f'(e^{-b_{\,\text{real}}})>0$ if $\mathbb{P}_g(\bm{x})$ satisfies an inequality
\begin{eqnarray}
\begin{aligned}
1-e^{-b_{\,\text{fake}}} < \frac{\mathbb{P}_r(\bm{x})}{\mathbb{P}_r(\bm{x})+\mathbb{P}_g(\bm{x})} < e^{-b_{\,\text{real}}}.
\end{aligned}
\end{eqnarray}
Therefore, $D(\bm{x}) = \displaystyle\frac{\mathbb{P}_r(\bm{x})}{\mathbb{P}_r(\bm{x})+\mathbb{P}_g(\bm{x})}$ gives the minimum in the interval.
On the other hand, if $\mathbb{P}_g(\bm{x})$ does not satisfy the inequality, $f(D(\bm{x}))$ monotonically increase or decrease, and $D(\bm{x}) \in \{ 1-e^{-b_{\,\text{fake}}}, e^{-b_{\,\text{real}}}\}$ gives the minimum in the interval.

\noindent\textbf{Case 2(c).} If we assume $e^{-b_{\,\text{real}}}\leq D(\bm{x})<1$, we have
\begin{eqnarray}
\begin{aligned}
f(D(\bm{x})) =& \mathbb{P}_r(\bm{x})[\log(D(\bm{x}))+2b_{\,\text{real}}] +\mathbb{P}_g(\bm{x})[-\log(1-D(\bm{x}))], 
\end{aligned}
\end{eqnarray}
As Case 1(a) we obtain $f'(D(\bm{x}))>0$.
Therefore, $D(\bm{x})=e^{-b_{\,\text{real}}}$ gives the minimum of $f(D(\bm{x}))$ in the interval.

Finally, from Case 2(a), 2(b), and 2(c), we can prove Eq.~\eqref{Theorem_1_formula_2}.
\qed
\end{proof}

\section{Discussions of Theorem 1}
\label{sec:analysis_of_theorem_1}
We can assume $b_{\,\text{fake}} > 0$ with flooding and the BCE loss because the BCE loss is non-negative.
If and only if $\mathbb{P}_g(\bm{x})$ satisfies the inequality \eqref{condition}, which assumes $e^{-b_{\,\text{real}}} + e^{-b_{\,\text{fake}}} > 1$,
\begin{equation}
\begin{split}
    \mathbb{P}_g(\bm{x}) \in ((\frac{1}{e^{-b_{\,\text{real}}}}-1)\mathbb{P}_r(\bm{x}), (\frac{1}{1-e^{-b_{\,\text{fake}}}}-1)\mathbb{P}_r(\bm{x})),
\end{split}
\end{equation}
on $Supp(\mathbb{P}_r(\bm{x}))$.
We refer to the interval as $I(\mathbb{P}_r(\bm{x}), b_{\,\text{real}}, b_{\,\text{fake}})$ and then obtain 
\begin{equation}
\begin{split}
    &b_{\,\text{real}, 0} \leq b_{\,\text{real}, 1} \Rightarrow I(\mathbb{P}_r(\bm{x}), b_{\,\text{real}, 0}, b_{\,\text{fake}})  \supseteq I(\mathbb{P}_r(\bm{x}), b_{\,\text{real}, 1}, b_{\,\text{fake}}), \\
    &b_{\,\text{fake}, 0} \leq b_{\,\text{fake}, 1} \Rightarrow I(\mathbb{P}_r(\bm{x}), b_{\,\text{real}}, b_{\,\text{fake}, 0})  \supseteq I(\mathbb{P}_r(\bm{x}), b_{\,\text{real}}, b_{\,\text{fake}, 1}).
\end{split}
\end{equation}
From this perspective, the discriminator with higher flood level is more difficult to satisfy the inequality \eqref{condition}.

Moreover, Theorem \ref{Theorem_1} indicates that even if either $b_{\,\text{real}}$ or $b_{\,\text{fake}}$ is greater than $\log 2$, there are some settings ($b_{\,\text{real}}$, $b_{\,\text{fake}}$) which satisfy $D^*(\bm{x})= \displaystyle\frac{\mathbb{P}_r(\bm{x})}{\mathbb{P}_r(\bm{x})+\mathbb{P}_g(\bm{x})}$ where $e^{-b_{\,\text{real}}} + e^{-b_{\,\text{fake}}} > 1$ and the inequality \eqref{condition}.
It can explain that some training results at Table \ref{table:result_flooding_various_setting_mode} and \ref{table:result_flooding_various_setting_hq} (detailed in Supplementary Section \ref{sec:additional_synthetic}) don't collapse completely, which is that the average of modes and HQ are not zero, with the BCE loss and the setting that either $b_{\,\text{real}}$ or $b_{\,\text{fake}}$ is greater than $\log 2$.
On the other hand, it is still unclear why the overall performance is poor with such setting .
We offer one explanation by using $I(\mathbb{P}_r(\bm{x}), b_{\,\text{real}}, b_{\,\text{fake}})$.
The ultimate goal of GANs is to achieve $\mathbb{P}_r(\bm{x}) = \mathbb{P}_g(\bm{x})$.
Therefore, we confirm that $I(\mathbb{P}_r(\bm{x}), b_{\,\text{real}}, b_{\,\text{fake}})$ includes $\mathbb{P}_r(\bm{x})$ on $Supp(\mathbb{P}_r(\bm{x}))$,
\begin{equation}
\begin{split}
    & \mathbb{P}_r(\bm{x}) = \mathbb{P}_g(\bm{x}) \in I(\mathbb{P}_r(\bm{x}), b_{\,\text{real}}, b_{\,\text{fake}}) \\
    & \Leftrightarrow ((\frac{1}{e^{-b_{\,\text{real}}}}-1) \mathbb{P}_r(\bm{x}) < \mathbb{P}_r(\bm{x})) \land (\mathbb{P}_r(\bm{x}) < (\frac{1}{1-e^{-b_{\,\text{fake}}}}-1) \mathbb{P}_r(\bm{x})) \\
    & \Leftrightarrow (\frac{1}{e^{-b_{\,\text{real}}}}-1 < 1) \land (1 < \frac{1}{1-e^{-b_{\,\text{fake}}}}-1) \\
    & \Leftrightarrow (e^{b_{\,\text{real}}} < 2) \land (e^{b_{\,\text{fake}}} < 2) \\
    & \Leftrightarrow (b_{\,\text{real}} < \log 2) \land (b_{\,\text{fake}} < \log 2).
\end{split}
\end{equation}
It indicates that the appropriate flood levels $b_{\,\text{real}}$ and $b_{\,\text{fake}}$ should be less than $L_{D_{\mathrm{opt}}, \mathrm{real}}$ and $L_{D_{\mathrm{opt}}, \mathrm{fake}}$, and the training does not converge to $\mathbb{P}_r(\bm{x}) = \mathbb{P}_g(\bm{x})$ if the flood level does not follow the appropriate setting.

\section{Code Implementation}
\label{sec:code_implementation}
We mainly referred to the following codes and customized them to conduct experiments with various losses and models.
\begin{itemize}
    \item Synthetic data: \url{https://github.com/igul222/improved_wgan_training}
    \item DCGAN: \url{https://github.com/pytorch/tutorials/blob/main/beginner_source/dcgan_faces_tutorial.py}
    \item FID: \url{https://github.com/mseitzer/pytorch-fid}
    \item ADDA: \url{https://github.com/jvanvugt/pytorch-domain-adaptation}
    \item Stargan-v2: \url{https://github.com/clovaai/stargan-v2}
\end{itemize}

\section{Details of Experiments}
\label{sec:details_of_Experiments}
\subsection{Flood level settings}
According to Hypotheses \ref{hyp:standard_1} and \ref{hyp:standard_2}, we consider that $L_{D_{\mathrm{opt}}, \mathrm{real}}$, $L_{D_{\mathrm{opt}}, \mathrm{fake}}$, and $L_{D_{\mathrm{opt}}}$, which is the discriminator's loss at the theoretical convergence, is crucial on the flood level $b$ setting.
To explore the optimal setting of the flood level, we verified the flood levels in Table \ref{table:flooding_level}.
For the flood level $b_{\,\text{real}}$ in the flooding types 1 and 2, we tried five different settings of the flood level in Table \ref{table:flooding_level}, and we use $b_{\,\text{fake}}$ calculated from Table \ref{table:flooding_level} which replace $L_{D_{\mathrm{opt}}, \mathrm{real}}$ to $L_{D_{\mathrm{opt}}, \mathrm{fake}}$. 
We examined the flood level $b_{\,\text{all}}$ in flooding type 3 for five different flood level settings in Table \ref{table:flooding_level}.
Note that for the Wasserstein loss, we cannot use the flood level strategy as other adversarial losses like $m \cdot L_{D_{\mathrm{opt}}}$ because of $L_{D_{\mathrm{opt}}}=0$ for the Wasserstein loss.
Therefore, we set an appropriate value while preserve the flood level Opt equals to $0$ ($=L_{D_{\mathrm{opt}}}$).

\begin{table}[t]
  \begin{center}
    {\small{
\begin{tabular}{llllll}
\toprule
Flooding type & Small & Medium & Near Opt & Opt & Over Opt \\
\midrule
\multicolumn{6}{l}{BCE loss and Least squares loss} \\
\hline
1, 2& $0.1 \cdot L_{D_{\mathrm{opt}}, \mathrm{real}}$ & $0.5 \cdot L_{D_{\mathrm{opt}}, \mathrm{real}}$ & $0.9 \cdot L_{D_{\mathrm{opt}}, \mathrm{real}}$ & $L_{D_{\mathrm{opt}}, \mathrm{real}}$ & $1.5 \cdot L_{D_{\mathrm{opt}}, \mathrm{real}}$ \\
3 & $0.1 \cdot L_{D_{\mathrm{opt}}}$ & $0.5 \cdot L_{D_{\mathrm{opt}}}$ & $0.9 \cdot L_{D_{\mathrm{opt}}}$ & $L_{D_{\mathrm{opt}}}$ & $1.5 \cdot L_{D_{\mathrm{opt}}}$ \\
\hline
\multicolumn{6}{l}{Hinge loss} \\
\hline
1, 2& $0.05 \cdot L_{D_{\mathrm{opt}}}$ & $0.25 \cdot L_{D_{\mathrm{opt}}}$ & $0.45 \cdot L_{D_{\mathrm{opt}}}$ & $0.5 \cdot L_{D_{\mathrm{opt}}}$ & $0.75 \cdot L_{D_{\mathrm{opt}}}$ \\
3 & $0.1 \cdot L_{D_{\mathrm{opt}}}$ & $0.5 \cdot L_{D_{\mathrm{opt}}}$ & $0.9 \cdot L_{D_{\mathrm{opt}}}$ & $L_{D_{\mathrm{opt}}}$ & $1.5 \cdot L_{D_{\mathrm{opt}}}$ \\
\hline
\multicolumn{6}{l}{Wasserstein loss} \\
\hline
1,2 &$-0.50$ & $-0.25$ & $-0.05$ & $0.0$ & $0.25$ \\
3 & $-1.0$ & $-0.5$ & $-0.1$ & $0.0$ &$0.5$ \\

\bottomrule
\end{tabular}
}}
\end{center}
\caption{
Flood level settings for pairs of adversarial losses and flooding type. 
$L_{D_{\mathrm{opt}}, \mathrm{real}}$ is the discriminator's loss for real data at the theoretical convergence, and $L_{D_{\mathrm{opt}}}$ is the sum of the discriminator's loss for real and generated data at the theoretical convergence.
When we used flooding for generated data, we replaced $L_{D_{\mathrm{opt}}, \mathrm{real}}$ with $L_{D_{\mathrm{opt}}, \mathrm{fake}}$.
}
\label{table:flooding_level}
\end{table}

\subsection{Implementation}

\noindent\textbf{Synthetic Dataset} 
To examine the effect of flooding for training GANs, we used the ring of 2D Gaussian dataset (2D Ring) as previous research~\cite{lim2017geometric,MDGANs,NIPS2017_892c3b1c,cf-mc-gans,Mao_2017_ICCV}.
The dataset is sampled from the distribution that is composed of eight Gaussian components with the same standard deviation $\sigma$, arranged in a circular pattern, as Figure \ref{fig:synthetic-mode-hq-transition} (c) (original).
For each iteration, we sampled training data from the distribution.
We use multi-layer perceptron as the generator and discriminator.
After training, we evaluated the variety and quality of generated samples with `modes' and `high quality (HQ),' proposed in~\cite{VEEGANs}.
We sampled 2,500 generated samples, then we counted modes as the number of the center of the Gaussian components that have a sample located within 3$\sigma$ in $L_2$ distance.
Furthermore, we calculated HQ as the ratio of the samples that have a center located within 3$\sigma$.
For instance, if we use 2D Ring, it holds that modes $\leq 8$ and HQ $\leq 1$.
In order to confirm how much our method prevents mode collapse, we consider higher modes as better performance. If the modes are same, we consider one with higher HQ as better performance.

\noindent\textbf{DCGAN} 
We used unconditional DCGAN~\cite{radford2016unsupervised} to evaluate the performance on image generation. 
We tried generating CIFAR10, CIFAR100 at 32$\times$32, STL10 at 64$\times$64, CelebA at both 64$\times$64 and 128$\times$128.
We followed Radford et al.~\cite{radford2016unsupervised} for batch normalization layers and Miyato et al.~\cite{miyato2018spectral} for spectral normalization layers.
When adding the gradient penalty, the implementation details, such as loss weights, also followed the research of Gulrajani et al.~\cite{NIPS2017_892c3b1c}
When we used CDCGAN, we split the first convolution layer of DCGAN to two convolution layers for the input noise and the class label.
After that, we concatenated output of the two convolution layers.
We used the same layers as DCGAN for the second and the following layers, and gave the second layer the concatenated output as input.
The batch size was 128, the learning rate was 0.0002, and Adam optimizer ($\beta = (0.5,0.999)$) was employed.
The training used one GPU for 100,000 iterations.
We conducted each experiment five times, and the generated images were evaluated using Fréchet Inception Distance (FID)~\cite{NIPS2017_8a1d6947}.

\noindent\textbf{Domain adaptation}
When we adapt flooding for ADDA~\cite{Tzeng_2017_CVPR}, DANN~\cite{dann}, and WDGRL~\cite{wdgrl}, we regard features from source domain as real distribution and one from target domain as generated distribution.
Note that the experimental results without flooding have deteriorated compared to the officially announced scores in the page as shown in Section \ref{sec:code_implementation}.
However, the source code even with no changes caused it, and there are issues about the reproducibility of other experiments at the repository page.
Therefore, we regarded that the score is not an average but the best score.

\noindent\textbf{Large model}
We conducted experiments using StarGAN V2~\cite{choi2020stargan} to investigate the effect of flooding in the generation of larger images.
We followed the author's implementation and used CelebAHQ.
The training used two GPUs for 100,000 iterations. We conducted each experiment five times and evaluated the generated images with FID and LPIPS~\cite{zhang2018unreasonable}. 

\section{Additional synthetic dataset experiments}
\label{sec:additional_synthetic}
We verified the effect of flooding in Section \ref{sec:Experiment}.
In this section, We conducted a further experiments with synthetic dataset to explore the potential effects.

\subsection{Flooding with various combination of $b_{\,\text{real}}$ and $b_{\,\text{fake}}$}

\begin{table}[t]\small
  \begin{center}
    {\small{
\begin{tabular}{lccccccc}
\toprule
$\downarrow b_{\,\text{real}}$  $b_{\,\text{fake}} \rightarrow$  & w/o flooding & Small  &  Medium & Near Opt & Opt & Over Opt \\
\midrule
w/o flooding & 4.8 (1.2) & 5.0 (2.6) & \textbf{8.0 (0.0)} & 8.0 (0.0) & 8.0 (0.0) & 8.0 (0.0) \\
Small & 2.4 (2.1) & 6.6 (0.8) & 8.0 (0.0) & 8.0 (0.0) & 8.0 (0.0) & 8.0 (0.0) \\
Medium & 4.0 (1.8) & 5.6 (0.5) & 7.8 (0.4) & 8.0 (0.0) & 8.0 (0.0) & 8.0 (0.0) \\
Near Opt & 2.2 (1.3) & 5.0 (0.0) & 5.6 (0.5) & 7.0 (0.0) & 7.8 (0.4) & \color{blue}0.0 (0.0)\color{black} \\
Opt & 2.0 (1.7) & 4.2 (0.4) & 5.0 (0.6) & 6.8 (0.4) & 2.0 (0.6) & \color{blue}0.0 (0.0)\color{black} \\
Over Opt & 1.4 (0.8) & 3.6 (0.8) & 5.0 (0.6) & \color{blue}0.0 (0.0)\color{black} & \color{blue}0.0 (0.0)\color{black} & \color{blue}0.0 (0.0)\color{black} \\
\bottomrule
\end{tabular}
}}
\end{center}
\caption{
Average (standard deviation) of modes with BCE loss with various flooding setting. 
We use BCE loss and $L_{D,\text{flood},1}$. We colorize value with blue if the  $b_{\,\text{real}}$ and $b_{\,\text{fake}}$ don't satisfy $1 < e^{-b_{\,\text{real}}} + e^{-b_{\,\text{fake}}}$.
}
\label{table:result_flooding_various_setting_mode}
\end{table}

\begin{table}[t]\small
  \begin{center}
    {\small{
\begin{tabular}{lccccccc}
\toprule
$\downarrow b_{\,\text{real}}$  $b_{\,\text{fake}} \rightarrow$  & w/o flooding & Small &  Medium & Near Opt & Opt & Over Opt \\
\midrule
w/o flooding & 0.90 (0.11) & 0.72 (0.36) & \textbf{0.89 (0.05)} & 0.82 (0.10) & 0.87 (0.09) & 0.51 (0.05)
 \\
Small & 0.53 (0.43) & 0.94 (0.03) & 0.86 (0.05) & 0.87 (0.06) & 0.88 (0.08) & 0.49 (0.12)
 \\
Medium & 0.77 (0.13) & 0.87 (0.05) & 0.87 (0.07) & 0.82 (0.06) & 0.82 (0.14) & 0.50 (0.09)
 \\
Near Opt & 0.69 (0.37) & 0.91 (0.04) & 0.84 (0.07) & 0.90 (0.03) & 0.83 (0.08) & \color{blue}0.00 (0.00)\color{black}
 \\
Opt & 0.53 (0.44) & 0.90 (0.06) & 0.87 (0.09) & 0.80 (0.12) & 0.00 (0.00) & \color{blue}0.00 (0.00)\color{black}
 \\
Over Opt & 0.71 (0.36) & 0.86 (0.12) & 0.74 (0.13) & \color{blue}0.00 (0.00)\color{black} & \color{blue}0.00 (0.00)\color{black} & \color{blue}0.00 (0.00)\color{black}
 \\
\bottomrule
\end{tabular}
}}
\end{center}
\caption{
Average (standard deviation) of HQ with BCE loss with various flooding setting. 
We use BCE loss and $L_{D,\text{flood},1}$. We colorize value with blue if the  $b_{\,\text{real}}$ and $b_{\,\text{fake}}$ don't satisfy $1 < e^{-b_{\,\text{real}}} + e^{-b_{\,\text{fake}}}$.
}
\label{table:result_flooding_various_setting_hq}
\end{table}

\begin{table}[t]\small
  \begin{center}
    {\small{
\begin{tabular}{lccccccc}
\toprule
$\downarrow b_{\,\text{real}}$  $b_{\,\text{fake}} \rightarrow$  & w/o flooding & Small &  Medium & Near Opt & Opt & Over Opt \\
\midrule
w/o flooding & - & - & - & - & - & - \\
Small & - & 1.87 & 1.64 & 1.47 & 1.43 & 1.29 \\
Medium & - & 1.64 & 1.41 & 1.24 & 1.21 & 1.06 \\
Near Opt & - & 1.47 & 1.24 & 1.07 & 1.04 & \color{blue}0.89\color{black} \\
Opt & - & 1.43 & 1.21 & 1.04 & 1.00 & \color{blue}0.85\color{black} \\
Over Opt & - & 1.29 & 1.06 & \color{blue}0.89\color{black} & \color{blue}0.85\color{black} & \color{blue}0.71\color{black} \\
\bottomrule
\end{tabular}
}}
\end{center}
\caption{
Table of $e^{-b_{\,\text{real}}} + e^{-b_{\,\text{fake}}}$ of BCE loss.
We colorize value with blue if the  $b_{\,\text{real}}$ and $b_{\,\text{fake}}$ don't satisfy $1 < e^{-b_{\,\text{real}}} + e^{-b_{\,\text{fake}}}$.
}
\label{table:inequality_is_held}
\end{table}

We investigated the effect of flooding for GAN training with the condition $b_{\,\text{real}} = b_{\,\text{fake}}$ or one-sided flooding in Section \ref{sec:flooding_bce} and \ref{sec:one-sided_flooding}.
In this section, we conducted experiments with $b_{\,\text{real}}$ and $b_{\,\text{fake}}$ without such condition. 
We assigned the six flood level settings (w/o flooding, Small,  Medium, Near Opt, Opt, and Over Opt) to $b_{\,\text{real}}$ and $b_{\,\text{fake}}$, and we conducted experiments with the 6$\times$6 combinations of $b_{\,\text{real}}$ and $b_{\,\text{fake}}$.

We show the experimental results of modes on Table \ref{table:result_flooding_various_setting_mode} and HQ on Table \ref{table:result_flooding_various_setting_hq}.
First, one-sided (fake) flooding ($b_{\,\text{real}}$: w/o flooding, $ b_{\,\text{fake}}$: Medium) shows the best performance.
It supports that the discriminator overfits the generated data with synthetic dataset.
Furthermore, it is noteworthy that if both $b_{\,\text{real}}$ and $b_{\,\text{fake}}$ exceed Opt, the training completely collapses, which is that the average of modes and HQ are zero.
On the other hand, if either of them exceeds Opt, some results does not completely collapse.
We give one explanation for the difference by referring to the inequality $e^{-b_{\,\text{real}}} + e^{-b_{\,\text{fake}}} \leq 1$ in Theorem \ref{Theorem_1}.  
We shows the values of $e^{-b_{\,\text{real}}} + e^{-b_{\,\text{fake}}}$ for each combination of $b_{\,\text{real}}$ and $b_{\,\text{fake}}$ in Table \ref{table:inequality_is_held}. 
Next, based on these values, we colored tables \ref{table:result_flooding_various_setting_mode} and \ref{table:result_flooding_various_setting_hq} blue where the settings that did not satisfy the inequality.
Consequently, the settings that did not satisfy the inequality corresponded to those where the training completely collapsed, it supports the arguments of Theorem \ref{Theorem_1}.
On the other hand, if either of $b_{\,\text{real}}$ and $b_{\,\text{fake}}$ exceeds Opt, the performance is still low performance in modes or HQ.
It supports the arguments of Supplementary Section \ref{sec:analysis_of_theorem_1}.

\subsection{Change of flooding type with various adversarial losses}
We verified the relation of flooding types ($L_{D,\text{flood},1}$, $L_{D,\text{flood},2}$, and $L_{D,\text{flood},3}$) and the performance with BCE loss in Section \ref{sec:flooding_bce}.
In this section, we investigated the best flooding type with adversarial losses other than BCE loss. 
The flood level setting is provided on Table \ref{table:flooding_level}.

The results are shown in Table \ref{table:result_synthetic_flooding_type_various_losses}.
With all adversarial losses $L_{D,\text{flood},1}$ flood level Medium achieved the best performance. 
For adversarial losses other than the Wasserstein loss, the upper and lower bounds for losses at the theoretical convergence $L_{D_{\mathrm{opt}}, \mathrm{real}}$ and $L_{D_{\mathrm{opt}}, \mathrm{fake}}$ are somewhat fixed. 
Therefore, we discovered an empirical rule that we should apply flooding with $L_{D,\text{flood},1}$ and the flood level within a range that is not too close to the upper and lower bounds.

\begin{table}\small
  \begin{center}
    {\small{
\begin{tabular}{lccccccc}
\toprule
Flooding type & Eval & w/o flooding & Small & Medium & Near Opt & Opt & Over Opt \\
\midrule
BCE Loss & & & & & & & \\
\midrule
$L_{D,\text{flood},1}$ & Modes & 4.8 (1.2) & 6.6 (0.8) & \textbf{7.8 (0.4)} & 7.0 (0.0) & 2.0 (0.6) & 0.0 (0.0) \\
                       & HQ & 0.90 (0.11) & 0.94 (0.03) & \textbf{0.87 (0.07)} & 0.90 (0.03) & 0.00 (0.00) & 0.00 (0.00) \\
$L_{D,\text{flood},2}$ & Modes & - & 4.0 (2.0) & 4.2 (1.3) & 7.0 (0.6) & 0.2 (0.4) & 0.0 (0.0) \\
                       & HQ & - & 0.65 (0.35) & 0.75 (0.18) & 0.28 (0.18) & 0.00 (0.00) & 0.00 (0.00) \\
$L_{D,\text{flood},3}$ & Modes & - & 4.0 (2.3) & 5.2 (1.9) & 7.2 (0.4) & 2.8 (1.6) & 0.0 (0.0) \\
                       & HQ & - & 0.68 (0.35) & 0.91 (0.06) & 0.24 (0.19) & 0.01 (0.01) & 0.00 (0.00) \\
\midrule
Hinge Loss & & & & & & & \\
\midrule
$L_{D,\text{flood},1}$ & Modes & 7.4 (0.8) & 6.6 (0.8) & \textbf{8.0 (0.0)} & 7.4 (0.8) & 0.6 (0.5) & 0.0 (0.0) \\
                       & HQ & 0.83 (0.13) & 0.73 (0.20) & \textbf{0.78 (0.04)} & 0.81 (0.07) & 0.00 (0.00) & 0.00 (0.00) \\
$L_{D,\text{flood},2}$ & Modes & - & 7.2 (0.4) & 6.6 (0.5) & 7.6 (0.5) & 0.0 (0.0) & 0.0 (0.0) \\
                       & HQ & - & 0.85 (0.10) & 0.83 (0.06) & 0.33 (0.16) & 0.00 (0.00) & 0.00 (0.00) \\
$L_{D,\text{flood},3}$ & Modes & - & 5.8 (0.7) & 7.6 (0.5) & 7.6 (0.5) & 2.4 (0.8) & 0.0 (0.0) \\
                       & HQ & - & 0.90 (0.03) & 0.79 (0.12) & 0.53 (0.31) & 0.00 (0.00) & 0.00 (0.00) \\
\midrule
LS Loss & & & & & & & \\
\midrule
$L_{D,\text{flood},1}$ & Modes & 6.6 (0.8) & 6.8 (0.4) & \textbf{7.8 (0.4)} & 7.4 (0.5) & 0.0 (0.0) & 0.0 (0.0) \\
                       & HQ & 0.80 (0.09) & 0.79 (0.09) & \textbf{0.90 (0.02)} & 0.78 (0.07) & 0.00 (0.00) & 0.00 (0.00) \\
$L_{D,\text{flood},2}$ & Modes & - & 7.0 (0.0) & 7.2 (0.7) & 6.8 (0.4) & 0.6 (0.8) & 0.2 (0.4) \\
                       & HQ & - & 0.82 (0.07) & 0.79 (0.13) & 0.44 (0.14) & 0.00 (0.00) & 0.00 (0.00) \\
$L_{D,\text{flood},3}$ & Modes & - & 6.3 (0.8) & 6.4 (0.5) & 6.8 (0.4) & 2.8 (1.5) & 0.0 (0.0) \\
                       & HQ & - & 0.87 (0.04) & 0.80 (0.13) & 0.40 (0.11) & 0.01 (0.01) & 0.00 (0.00) \\
\midrule
Wasserstein loss & & & & & & &  \\
\midrule
$L_{D,\text{flood},1}$ & Modes & 8.0 (0.0) & 8.0 (0.0) & \textbf{8.0 (0.0)} & 0.0 (0.0) & 0.0 (0.0) & 0.0 (0.0) \\
                       & HQ & 0.93 (0.01) & 0.95 (0.01) & \textbf{0.95 (0.01)} & 0.00 (0.00) & 0.00 (0.00) & 0.00 (0.00) \\
$L_{D,\text{flood},2}$ & Modes & - & 8.0 (0.0) & 8.0 (0.0) & 0.0 (0.0) & 0.0 (0.0) & 0.0 (0.0) \\
                       & HQ & - & 0.93 (0.02) & 0.94 (0.02) & 0.00 (0.00) &  0.00 (0.00) & 0.00 (0.00) \\
$L_{D,\text{flood},3}$ & Modes & - & 8.0 (0.0) & 7.6 (0.8) & 8.0 (0.0) & 0.8 (1.6) & 0.0 (0.0) \\
                       & HQ & - & 0.90 (0.06) & 0.91 (0.10) & 0.88 (0.06) & 0.02 (0.05) & 0.00 (0.00) \\
\bottomrule
\end{tabular}
}}
\end{center}
\caption{
Average (standard deviation) of modes and HQ with various adversarial losses, different flooding types, and flood levels.
}
\label{table:result_synthetic_flooding_type_various_losses}
\end{table}

\begin{table}
  \begin{center}
    {\small{
\begin{tabular}{lccccc}
\toprule
Flooding for $G$ ($\downarrow$) and $D$ ($\rightarrow$) & Eval & w/o flooding & Two-sided & One-sided real & One-sided fake \\
\midrule
w/o flooding    & Modes & 4.8 (1.2) & 7.8 (0.4) & 4.0 (1.8) & \textbf{8.0 (0.0)} \\
                & HQ & 0.90 (0.11) & 0.87 (0.07) & 0.77 (0.13) & \textbf{0.89 (0.08)} \\
With flooding   & Modes & 3.2 (2.6) & 7.6 (0.5) & 3.8 (0.7) & \textbf{8.0 (0.0)} \\
                & HQ & 0.46 (0.39) & 0.89 (0.06) & 0.83 (0.12) & \textbf{0.89 (0.07)} \\
\bottomrule
\end{tabular}
}}
\end{center}
\caption{
Average (standard deviation) of modes and HQ with or without flooding for the generator $G$ (vertical axis). We conducted experiments with BCE loss, flood level Medium, and flooding for the discriminator $D$ (horizontal axis).
}
\label{table:result_flooding_generator}
\end{table}

\subsection{Flooding for the generator's loss}
We verified the effect of flooding for the discriminator in previous sections because the discriminator often cause the instability.
In this section, we investigated the performance when flooding is applied to the generator. 
As flooding for the discriminator's loss with flooding type $L_{D,\text{flood},1}$, we adopt the following recalculation function with flood level $b_{\,G}$ as 
\begin{equation}
\begin{split}
\label{eq:GANs_adv_loss_flood_gen}
L_{G,\text{flood},1} 
=& \mathbb{E}_{x \sim \mathbb{P}_g}[h(f_{G}(D(\bm{x})), b_{\,\text{G}})] .
\\
\end{split}
\end{equation}
We conducted experiments with BCE loss and flood level $0.5 \cdot  \log 2$ because of $L_{G, \mathrm{opt}} = \log 2$ with BCE loss.

Table \ref{table:result_flooding_generator} shows the result. 
In this experimental setting, there was no benefit to applying flooding to the generator, in some cases it led to a degradation in performance.
However, since there could be potential benefits to applying flooding to the generator in experimental settings where the generator is prone to overfitting, we believe the potential of flooding for the generator.

\subsection{Change of experimental settings with flooding}
Next, we evaluated performance when changing experimental settings from the code described in Supplementary Section \ref{sec:code_implementation}.
We did experiments with changing of the number of updates, batch size, depth of the layers, and dataset size.

We show the results in Table \ref{table:result_synthetic_flooding_various_setting}.
It is important that the experiment results without flooding cause performance degradation in many cases according to the changes.
It indicates the vulnerability of GANs to changes in experimental settings.
On the other hand, both two-sided flooding and one-sided (fake) flooding generally improved the performance.
From these findings, we can conclude that flooding contributes to robustness against changes in experimental settings.

\begin{table}\small
  \begin{center}
    {\small{
\begin{tabular}{lccccc}
\toprule
Flooding type & Eval & w/o flooding & Two-sided & One-sided real & One-sided fake \\
\midrule
\multicolumn{6}{l}{(a) Number of updates for the generator or discriminator. } \\
\hline
Baseline & Modes & 4.8 (1.2) & 7.8 (0.4) & 4.0 (1.8) & \textbf{8.0 (0.0)} \\
                       & HQ & 0.90 (0.11) & 0.87 (0.07) & 0.77 (0.13) & \textbf{0.89 (0.08)} \\
$G$ updates (5$\times$) & Modes & 0.0 (0.0) & 7.0 (0.9) & 1.2 (0.4) & \textbf{8.0 (0.0)} \\
                       & HQ & 0.00 (0.00) & 0.87 (0.05) & 0.79 (0.17) & \textbf{0.82 (0.08)} \\
$D$ updates (5$\times$) & Modes & 0.0 (0.0) & \textbf{4.4 (1.0)} & 0.0 (0.0) & 0.0 (0.0) \\
                       & HQ & 0.00 (0.00) & \textbf{0.84 (0.09)} & 0.00 (0.00) & 0.00 (0.00) \\
\hline
\multicolumn{6}{l}{(b) Batch size. } \\
\hline
4 & Modes & 3.0 (2.5) & \textbf{6.6 (0.8)} & 1.0 (2.0) & 5.0 (2.7) \\
                       & HQ & 0.24 (0.21) & \textbf{0.50 (0.09)} & 0.11 (0.21) & 0.36 (0.22) \\
16 & Modes & 2.2 (1.8) & 6.6 (0.5) & 1.8 (2.2) & \textbf{7.4 (0.5)} \\
                       & HQ & 0.44 (0.36) & 0.74 (0.02) & 0.22 (0.27) & \textbf{0.79 (0.07)} \\
64 & Modes & 2.6 (2.2) & 7.2 (0.4) & 1.6 (2.1) & \textbf{8.0 (0.0)} \\
                       & HQ & 0.49 (0.40) & 0.84 (0.08) & 0.35 (0.43) & \textbf{0.77 (0.22)} \\
256 (default) & Modes & 4.8 (1.2) & 7.8 (0.4) & 4.0 (1.8) & \textbf{8.0 (0.0)} \\
                       & HQ & 0.90 (0.11) & 0.87 (0.07) & 0.77 (0.13) & \textbf{0.89 (0.08)} \\
512 & Modes & 2.4 (2.9) & 8.0 (0.0) & 5.0 (1.1) & \textbf{8.0 (0.0)} \\
                       & HQ & 0.30 (0.39) & 0.85 (0.06) & 0.90 (0.06) & \textbf{0.91 (0.02)} \\
\hline
\multicolumn{6}{l}{(c) Number of layers. } \\
\hline
2 & Modes & 6.2 (3.1) & \textbf{8.0 (0.0)} & 7.8 (0.4) & 6.4 (3.2) \\
                       & HQ & 0.24 (0.16) & \textbf{0.47 (0.12)} & 0.38 (0.13) & 0.30 (0.20) \\
4 (default) & Modes & 4.8 (1.2) & 7.8 (0.4) & 4.0 (1.8) & \textbf{8.0 (0.0)} \\
                       & HQ & 0.90 (0.11) & 0.87 (0.07) & 0.77 (0.13) & \textbf{0.89 (0.08)} \\
6 & Modes & 0.0 (0.0) & \textbf{6.4 (0.5)} & 0.0 (0.0) & 4.6 (3.8) \\
                       & HQ & 0.00 (0.00) & \textbf{0.78 (0.12)} & 0.00 (0.00) & 0.30 (0.32) \\
8 & Modes & 0.0 (0.0) & \textbf{5.8 (1.2)} & 0.0 (0.0) & 3.2 (3.9) \\
                       & HQ & 0.00 (0.00) & \textbf{0.61 (0.28)} & 0.00 (0.00) & 0.32 (0.39) \\
\hline
\multicolumn{6}{l}{(d) Dataset size. } \\
\hline
1000 & Modes & 0.0 (0.0) & 7.6 (0.5) & 0.0 (0.0) & \textbf{8.0 (0.0)} \\
                       & HQ & 0.00 (0.00) & 0.87 (0.06) & 0.00 (0.00) & \textbf{0.90 (0.04)} \\
10000 & Modes & 1.0 (1.3) & 8.0 (0.0) & 3.8 (2.5) & \textbf{8.0 (0.0)} \\
                       & HQ & 0.28 (0.36) & 0.75 (0.19) & 0.73 (0.37) & \textbf{0.86 (0.07)} \\
100000 & Modes & 2.2 (2.0) & \textbf{8.0 (0.0)} & 4.4 (1.4) & 8.0 (0.0) \\
                       & HQ & 0.56 (0.46) & \textbf{0.93 (0.01)} & 0.81 (0.13) & 0.87 (0.05) \\
$\infty$ (default) & Modes & 4.8 (1.2) & 7.8 (0.4) & 4.0 (1.8) & \textbf{8.0 (0.0)} \\
                       & HQ & 0.90 (0.11) & 0.87 (0.07) & 0.77 (0.13) & \textbf{0.89 (0.08)} \\
\bottomrule
\end{tabular}
}}
\end{center}
\caption{
Average (standard deviation) of modes and HQ with BCE loss with flood level Medium, various experimental settings, and different flooding styles. 
(a) $G$ updates (5$\times$) and $D$ updates (5$\times$) means that we increase the number of updates of the generator and the discriminator, respectively, from one to five for each iteration.
(d) The dataset size in the default setting is denoted as $\infty$ because each iteration in the default setting samples from the true distribution. 
On the other hand, when we use datasets of finite size, we first prepare a fixed number of data from the true distribution to use as the dataset.
}
\label{table:result_synthetic_flooding_various_setting}
\end{table}

\begin{table}
  \begin{center}
    {\small{
\begin{tabular}{lccccc}
\toprule
Dataset & Eval & w/o flooding & Two-sided & One-sided (real) & One-sided (fake) \\
\midrule
2D Ring & Modes & 4.8 (1.2) & 7.8 (0.4) & 4.0 (1.8) & \textbf{8.0 (0.0)} \\
                       & HQ & 0.90 (0.11) & 0.87 (0.07) & 0.77 (0.13) & \textbf{0.89 (0.08)} \\
2D Grid & Modes & 10.6 (5.5) & 19.6 (0.8) & 12.8 (1.5) & \textbf{21.6 (2.2)} \\
                       & HQ & 0.70 (0.35) & 0.85 (0.06) & 0.77 (0.12) & \textbf{0.87 (0.03)} \\
\bottomrule
\end{tabular}
}}
\end{center}
\caption{
Average (standard deviation) of modes and HQ with BCE loss with flood level Medium, datasets (2D Ring and 2D Grid), and different flooding styles.
}
\label{table:result_synthetic_dataset_25Gaussians}
\end{table}

\subsection{Change of datasets}
Next, we evaluated the effect of flooding on datasets other than 2D Ring. 
2D Grid is a dataset where the Gaussian centers are arranged in a 5$\times$5 grid. 
Note that it holds that modes $\leq 25$ because of changes of number of the Gaussian centers from 2D Ring.
We also evaluate the effect of flooding on this dataset with BCE loss and flood level Medium.

The results are presented in Table \ref{table:result_synthetic_dataset_25Gaussians}. 
Both two-sided flooding and one-sided (fake) flooding demonstrated performance improvement as 2D Ring.
It supports the flooding effect regardless of the dataset.

\subsection{Comparison with other functions with flooding function $h$}
The flooding function is defined as
\begin{equation}
\begin{split}
    h(L,b)=|L-b|+b=
    \begin{split}
      \begin{cases}
        -L+2b & \text{if\: $L < b$ is satisfied,} \\
        L & \text{otherwise.}
      \end{cases}
    \end{split}
\end{split}
\end{equation}
On the other hand, there are other possible methods than flooding to manipulate the loss below a certain value $b$.
We tried some methods (max, log, 10\%) that replace the flooding function $h$ with the following functions,
\begin{equation}
\begin{split}
\label{eq:GANs_other_modification}
h_{\text{max}}(L,b) =&
    \begin{split}
      \begin{cases}
        b & \text{if\: $L < b$ is satisfied,} \\
        L & \text{otherwise,}
      \end{cases}
    \end{split} \\
h_{\text{log}}(L,b) =&
    \begin{split}
      \begin{cases}
        b-\log(1+b-y) & \text{if\: $L < b$ is satisfied,} \\
        L & \text{otherwise,}
      \end{cases}
    \end{split}\\
h_{\text{10\%}}(L,b) =&
    \begin{split}
      \begin{cases}
        b-0.1 \cdot (b-y) & \text{if\: $L < b$ is satisfied,} \\
        L & \text{otherwise.}
      \end{cases}
    \end{split}\\
\end{split}
\end{equation}
The effect of the functions is illustrated in Figure \ref{fig:other_function}.
We applied the functions ($h_{\text{max}}$, $h_{\text{log}}$, and $h_{\text{10\%}}$) to the discriminator's loss.

\begin{figure}[t]
\begin{center}
\includegraphics[keepaspectratio, width=0.4\linewidth]{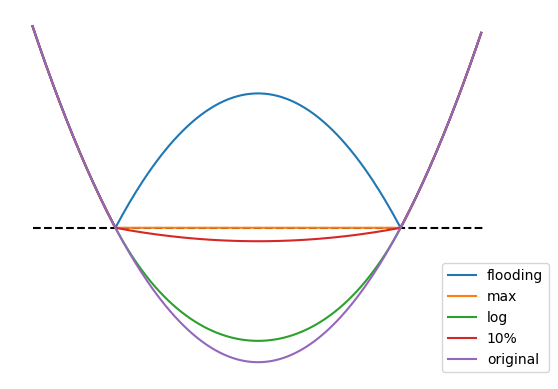}
\end{center}
\caption{
Illustration of changes of a graph (original) by using flooding (blue), max (yellow), log (green), and 10\% (red).
The horizontal dotted line (black) shows the $b$.
}
\label{fig:other_function}
\end{figure}

\begin{table}\small
  \begin{center}
    {\small{
\begin{tabular}{lccccccc}
\toprule
Adversarial loss & Eval & w/o flooding & flooding & max & log & 10\% \\
\midrule
BCE Loss    & Modes & 4.8 (1.2) & \textbf{7.8 (0.4)} & 4.0 (1.4) & 4.8 (0.7) & 5.2 (1.2) \\
            & HQ & 0.90 (0.11) & \textbf{0.87 (0.07)} & 0.90 (0.11) & 0.80 (0.12) & 0.85 (0.09) \\
                
\bottomrule
\end{tabular}
}}
\end{center}
\caption{
Average (standard deviation) of modes and HQ with BCE loss with flood level Medium, and functions which change the loss below a certain value $b$ as Figure \ref{fig:other_function}.
}
\label{table:result_synthetic_modification}
\end{table}

Table \ref{table:result_synthetic_modification} shows the results.
Results with the functions, $h_{\text{max}}$, $h_{\text{log}}$, and $h_{\text{10\%}}$, did not show meaningful improvements and some methods cause performance degradation.
It supports the advantage of flooding function $h$ in preventing the training instability.
It is important that even $h_{\text{max}}$ cannot prevent the training instability.
Both of $h_{\text{max}}$ and flooding function $h$ prevents the drop from $b$ as Figure \ref{fig:other_function}, however, only $h$ prevents the instability.
Therefore, we found that the effect of $h$ that causes the gradient flipping is crucial to GAN training.

\begin{figure}[t]
 \begin{minipage}[b]{0.48\linewidth}
  \centering
  \includegraphics[keepaspectratio, width=0.9\linewidth]{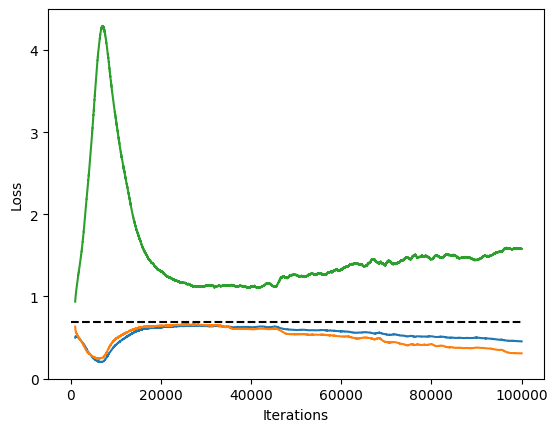}
  \subcaption{Without flooding}
 \end{minipage}
 \begin{minipage}[b]{0.48\linewidth}
  \centering
  \includegraphics[keepaspectratio, width=0.9\linewidth]{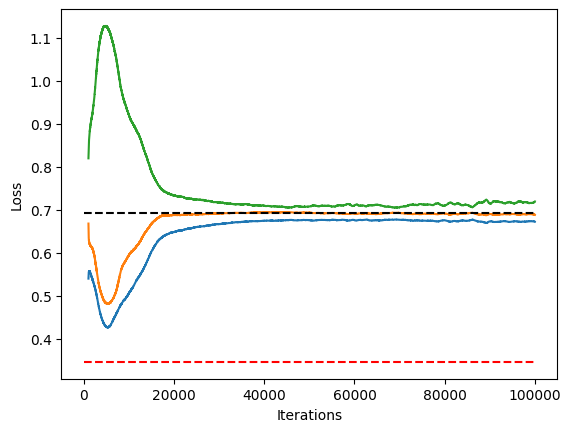}
  \subcaption{Two-sided flooding}
 \end{minipage}
 \\
 \begin{minipage}[b]{0.48\linewidth}
  \centering
  \includegraphics[keepaspectratio, width=0.9\linewidth]{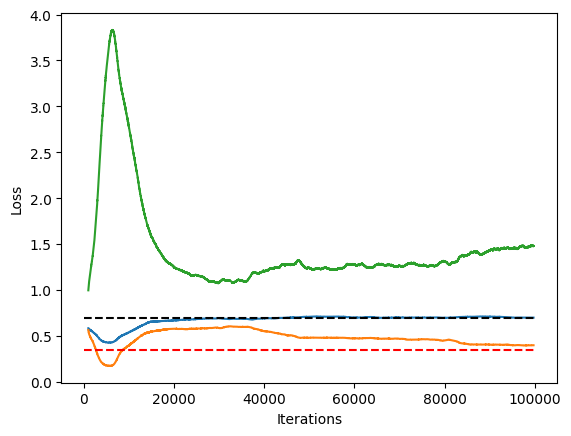}
  \subcaption{One-sided (real) flooding}
 \end{minipage} 
 \begin{minipage}[b]{0.48\linewidth}
  \centering
  \includegraphics[keepaspectratio, width=0.9\linewidth]{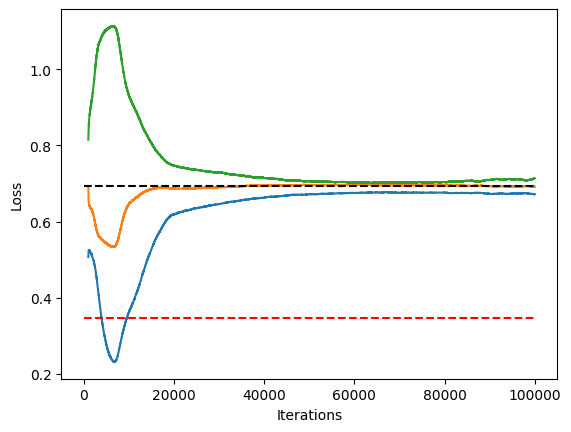}
  \subcaption{One-sided (fake) flooding}
 \end{minipage}
 \caption{Transition of losses $L_{D, \mathrm{real}}$ (blue), $L_{D, \mathrm{fake}}$ (yellow), and $L_{G}$ (green). 
 Because the fluctuations of these measurements during the training diminish the visibility, we took moving average with a window size of 1000. 
 The dashed line (black) expressed $L_{D, \mathrm{opt,real}}$ (blue), $L_{D, \mathrm{opt,fake}}$ (yellow), and $L_{G, \mathrm{opt}}$ ($=\log 2$). 
 The dashed line (red) expresses the flood level for the discriminator. Note that the discriminator's loss shown in the figure is that before applying flooding.
 }
 \label{fig:synthetic-loss-transition}
\end{figure}

\begin{figure}[t]
 \begin{minipage}[b]{0.48\linewidth}
  \centering
  \includegraphics[keepaspectratio, width=0.9\linewidth]{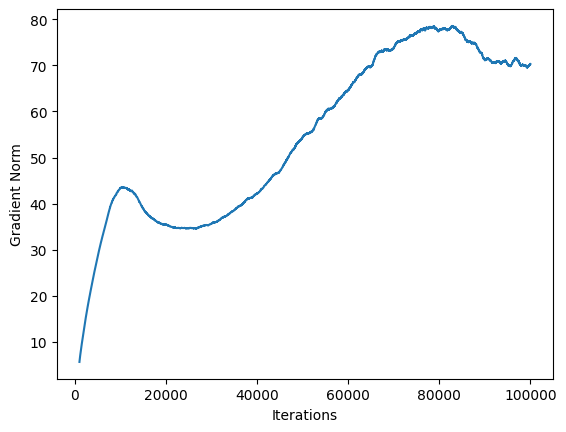}
  \subcaption{Without flooding}
 \end{minipage}
 \begin{minipage}[b]{0.48\linewidth}
  \centering
  \includegraphics[keepaspectratio, width=0.9\linewidth]{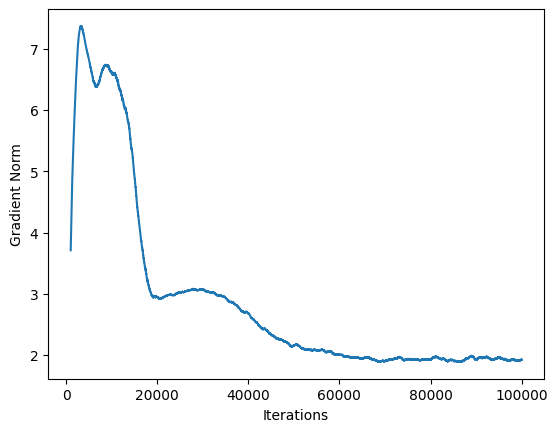}
  \subcaption{Two-sided flooding}
 \end{minipage}
 \\
 \begin{minipage}[b]{0.48\linewidth}
  \centering
  \includegraphics[keepaspectratio, width=0.9\linewidth]{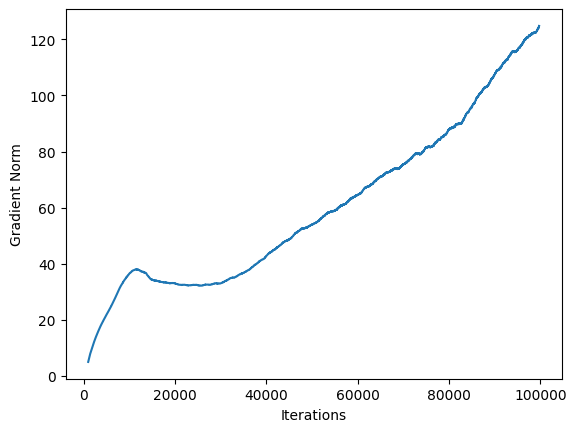}
  \subcaption{One-sided (real) flooding}
 \end{minipage} 
 \begin{minipage}[b]{0.48\linewidth}
  \centering
  \includegraphics[keepaspectratio, width=0.9\linewidth]{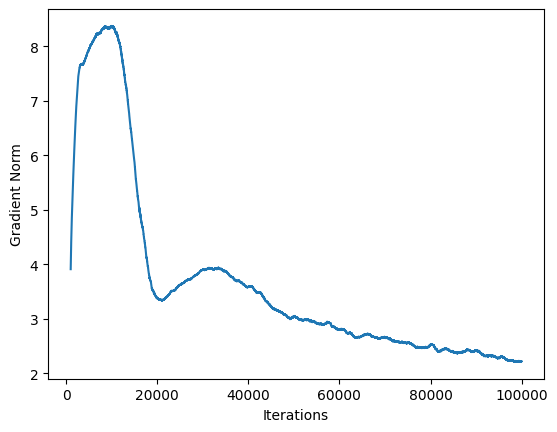}
  \subcaption{One-sided (fake) flooding}
 \end{minipage}
 \caption{
 Transition of gradient of the discriminator. 
 }
 \label{fig:synthetic-gradient-transition}
\end{figure}

\section{Analysis on the loss and gradient}
\label{sec:analysis_loss_gradient}
We showed experimentally that flooding stabilizes GAN training.
However, it is unknown why GAN training with flooding succeeds well even when flooding prevents the discriminator's loss from becoming low. 
For instance, the flood level Medium is not too low, but GAN training with the flood level was stable rather than collapsed.
In this section, We examined the loss of the generator and the discriminator during the training with synthetic dataset, BCE loss, and $L_{D,\mathrm{flood},1}$ to understand the dynamics of the two models training.

Figure \ref{fig:synthetic-loss-transition} shows the results.
Without flooding (Figure \ref{fig:synthetic-loss-transition} (a)), the discriminator's loss ($L_{D, \mathrm{real}}$ and $L_{D, \mathrm{fake}}$) is low around 10,000 iterations, while the generator's loss $L_{G}$ is high.
This is because the generated samples are not accurate at the beginning of training, making it easier to distinguish the real or generated samples, resulting in a lower discriminator's loss.
Subsequently, as the quality of the generated samples improves, the discriminator's loss increases steadily, but around 40,000 iterations, the discriminator's loss begins to decrease again and then it continues until the end of the training.
The loss transitions can be interpreted as overfitting of the discriminator to either the real or generated samples. 
On the other hand, with two-sided flooding (Figure \ref{fig:synthetic-loss-transition} (b)), such collapse does not occur, and it can be observed that it stably converges to the values at theoretical convergence ($L_{D, \mathrm{opt,real}}$, $L_{D, \mathrm{opt,fake}}$, and $L_{G, \mathrm{opt}}$). 
Moreover, comparing the one-sided flooding experiments (Figure \ref{fig:synthetic-loss-transition} (c) and (d)), while overfitting can be observed in one-sided (real) flooding, one-sided (fake) flooding stably converged. 
This suggests that overfitting to generated data is occurring in synthetic dataset experiments, and preventing this through flooding leads to stabilization.
We also consider that Figure \ref{fig:synthetic-loss-transition} supports the concept of Figure \ref{flooding_effect_concept} that flooding prevent rapid decline in the discriminator's loss, while Figure \ref{fig:synthetic-loss-transition} (a) does not show such rapid decline in Figure \ref{flooding_effect_concept} (a).
We believe that the drop in the batch loss shown in Figure \ref{fig:synthetic-loss-transition} (a) become smaller because overfitting at the instance level, as shown in Figure \ref{fig:synthetic-loss-transition} (a), occasionally occurs and the losses of overfitted instances and non-overfitted instances were averaged during the calculation of the batch loss.

Another interesting information in Figure \ref{fig:synthetic-loss-transition} is that, compared to w/o flooding, the discriminator's loss around 10,000 iterations is raised according to the flood level Medium, which is not too-low flood level.
The findings suggest that the discriminator does not classify perfectly according to the difference of the real and generated data distribution, which does not correspond to the proof procedure~\cite{NIPS2014_5ca3e9b1}, demanding an optimal discriminator that minimizes the loss for a fixed generator. 
On the other hand, according to the result of Arojovsky et al.~\cite{arjovsky2017towards} that such an optimal discriminator can cause instability, preventing the optimal discriminator by using flooding could be regarded as the advantage.
Note that Figure \ref{fig:synthetic-loss-transition} also shows the existence of the loss dynamics in GANs that do not follow the proof of procedure~\cite{NIPS2014_5ca3e9b1}, the details are future work.

Moreover, we investigated the model's gradient norms, which stability affects the training stability.

Figure \ref{fig:synthetic-gradient-transition} shows a graph of the transition of gradient norms during the experiment in Figure \ref{fig:synthetic-loss-transition}. 
Without flooding for $L_{D, \mathrm{fake}}$ ((a) and (c)), the gradient is large from the beginning and becomes even larger as training progresses.
It indicates instability in training as it can be confirmed even when the loss appears to be stable in 20,000~40,000 iterations in Figure \ref{fig:synthetic-loss-transition}. 
On the other hand, with flooding for $L_{D, \mathrm{fake}}$ ((b) and (d)), the gradient peak in the early stages of the training and then gradually decreases.
Furthermore, the gradient norms at the peak are also relatively small.
It suggests that flooding also affects suppressing the gradient and stabilize the training.

\section{Generated images}

We provide the generated images that we could not fully display in Figure \ref{fig:flooding_bce_images} and \ref{fig:flooding_stargan_images}.
Figure \ref{fig:celebA128_12x4} shows the generated images with DCGAN, CelebA (128$\times$128), and the BCE loss.
Moreover, Figure \ref{fig:flooding_stargan_images_supp_latent} and \ref{fig:flooding_stargan_images_supp_reference} show the generated images with StarGAN V2 and CelebAHQ.

\begin{figure}[t]
\centering
 \begin{minipage}[b]{0.9\linewidth}
  \centering
  \includegraphics[keepaspectratio, width=0.99\linewidth]{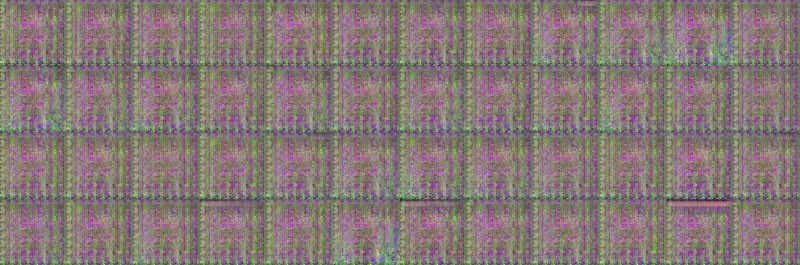}
  \subcaption{w/o flooding (collapse)}
 \end{minipage} \\
 \begin{minipage}[b]{0.9\linewidth}
  \centering
  \includegraphics[keepaspectratio, width=0.99\linewidth]{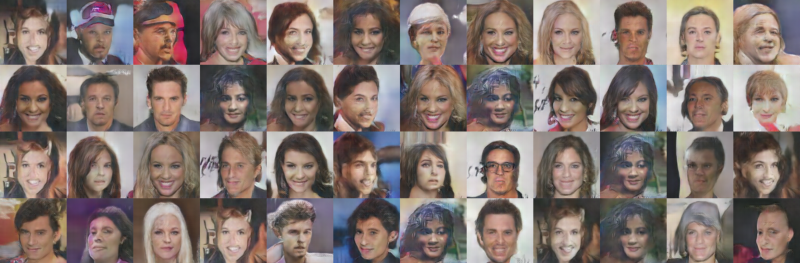}
  \subcaption{w/o flooding}
 \end{minipage} \\
 \begin{minipage}[b]{0.9\linewidth}
  \centering
  \includegraphics[keepaspectratio, width=0.99\linewidth]{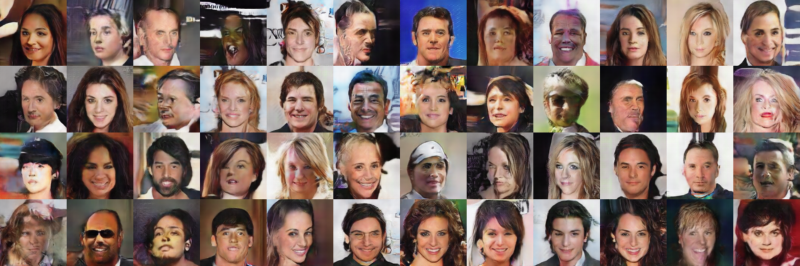}
  \subcaption{With flooding}
 \end{minipage} \\
 \caption{
    Generated images with DCGAN, CelebA (128$\times$128), and the BCE loss that we could not fully display in Figure \ref{fig:flooding_bce_images}.
    Without flooding, in four out of five trials the generated images were collapsed (a).
    In one trial, the generated images did not collapsed, however, causes the graying (b).
    On the other hand, the generated images of four trials with flooding are not collapsed (c).
 }
\label{fig:celebA128_12x4}
\end{figure}

\begin{figure}[t]
\begin{center}
  \includegraphics[keepaspectratio, width=1.0\linewidth]{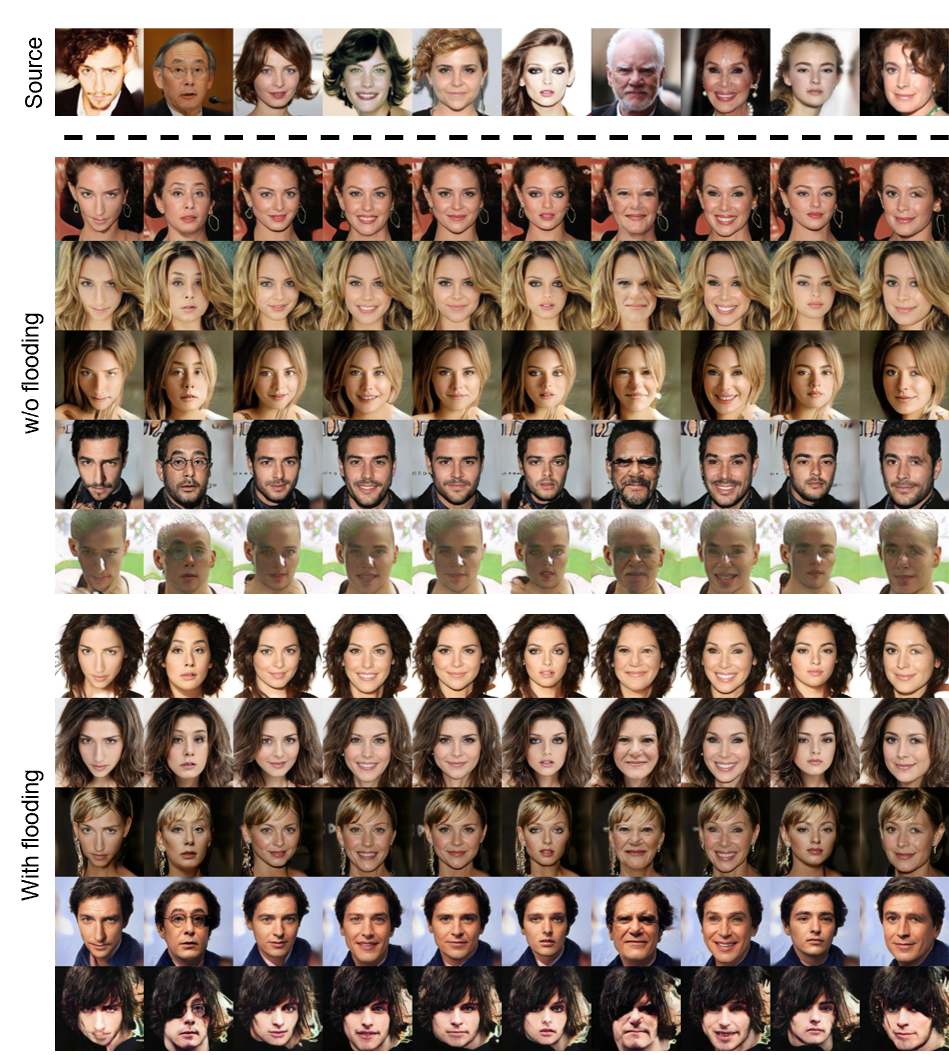}
\end{center}
 \caption{
 Generated images with source images and random latent vectors for style codes.
 We arrange the source images in the top row, and below them, we show the generated images, both with and without flooding.
 }
\label{fig:flooding_stargan_images_supp_latent}
\end{figure}

\begin{figure}[t]
\begin{center}
  \includegraphics[keepaspectratio, width=1.0\linewidth]{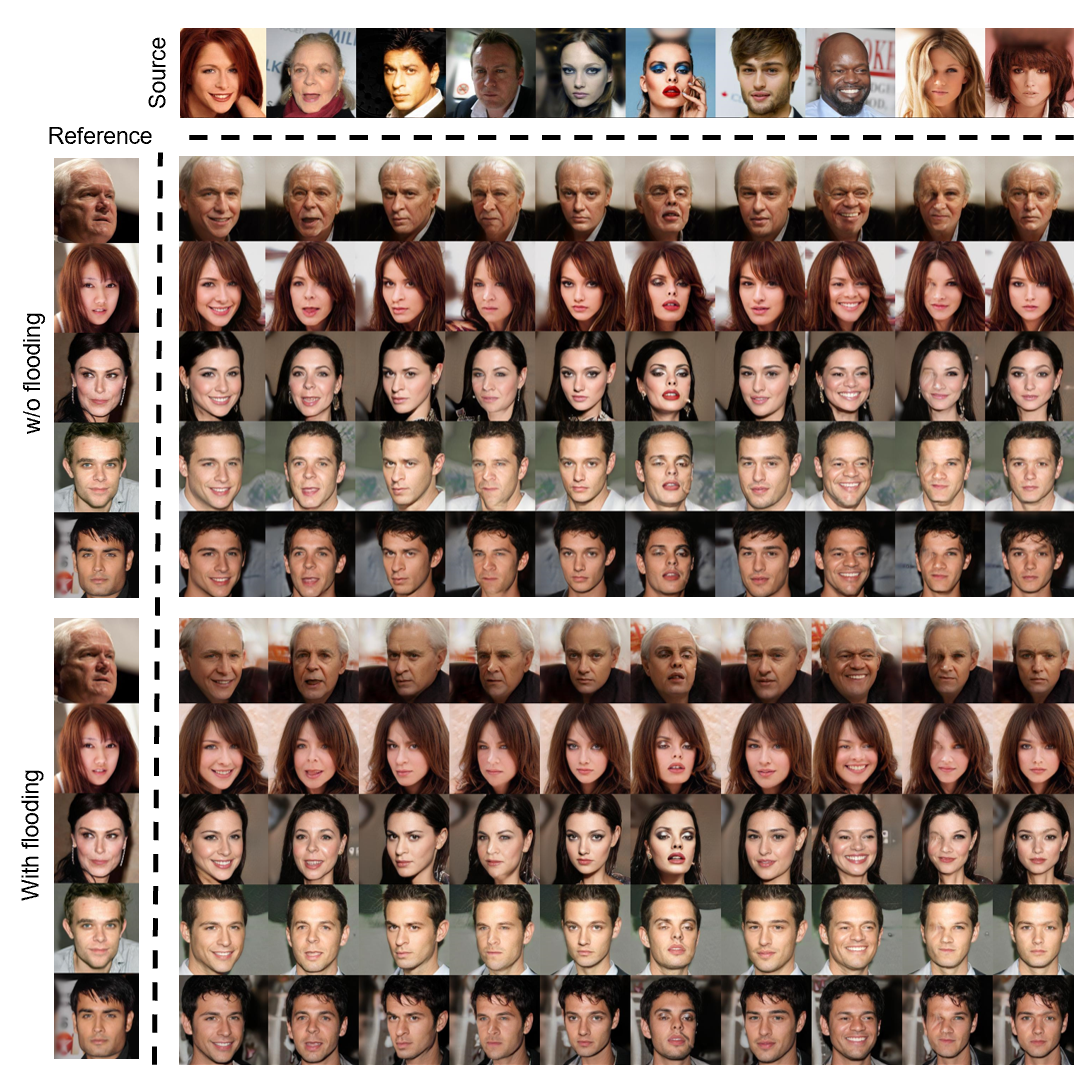}
\end{center}
 \caption{
 Generated images with source images and reference images for style codes.
 We arrange the source images in the top row, and below them, we show the generated images, both with and without flooding.
 We also arrange the reference images in the left column.
 }
\label{fig:flooding_stargan_images_supp_reference}
\end{figure}

\end{document}